\documentclass{article}

% if you need to pass options to natbib, use, e.g.:
%     \PassOptionsToPackage{numbers, compress}{natbib}
% before loading neurips_2020

% ready for submission
% \usepackage[preprint]{neurips_2020}

% to compile a preprint version, e.g., for submission to arXiv, add add the
% [preprint] option:
%     \usepackage[preprint]{neurips_2020}

% to compile a camera-ready version, add the [final] option, e.g.:
\usepackage[final]{neurips_2020}

% to avoid loading the natbib package, add option nonatbib:
    %  \usepackage[nonatbib]{neurips_2020}

\usepackage[utf8]{inputenc} % allow utf-8 input
\usepackage[T1]{fontenc}    % use 8-bit T1 fonts
\usepackage{hyperref}       % hyperlinks
\usepackage{url}            % simple URL typesetting
\usepackage{booktabs}       % professional-quality tables
\usepackage{amsfonts}       % blackboard math symbols
\usepackage{nicefrac}       % compact symbols for 1/2, etc.
\usepackage{microtype}      % microtypography
\usepackage{graphicx,color}
\usepackage{latexsym,amsmath,amssymb,amsfonts}
\usepackage{comment}
\usepackage{algorithm}
\usepackage[noend]{algpseudocode}
\usepackage{amsthm}
\usepackage{times}
\usepackage{subcaption}
\usepackage{mathabx}
\usepackage{multirow}
\usepackage{array}
\usepackage{wrapfig}
\usepackage{booktabs}
\newcolumntype{P}[1]{>{\centering\arraybackslash}p{#1}}
\newcolumntype{M}[1]{>{\centering\arraybackslash}m{#1}}

\hypersetup{colorlinks,
            linkcolor=blue,
            citecolor=blue,
            urlcolor=magenta,
            linktocpage,
            plainpages=false}

\newtheorem{thm}{Theorem}[section]
\newtheorem{lem}{Lemma}[section]

\newtheorem{cond}{Condition}[section]
\newtheorem*{remark}{Remark}

\bibliographystyle{apalike}

\title{Efficient Variational Inference for Sparse Deep Learning with Theoretical Guarantee}

% The \author macro works with any number of authors. There are two commands
% used to separate the names and addresses of multiple authors: \And and \AND.
%
% Using \And between authors leaves it to LaTeX to determine where to break the
% lines. Using \AND forces a line break at that point. So, if LaTeX puts 3 of 4
% authors names on the first line, and the last on the second line, try using
% \AND instead of \And before the third author name.

\author{%
  Jincheng Bai \\
  Department of Statistics\\
  Purdue University\\
  West Lafayette, IN 47906 \\
  \texttt{bai45@purdue.edu} \\
  % examples of more authors
   \And
  Qifan Song \\
  Department of Statistics\\
  Purdue University\\
  West Lafayette, IN 47906 \\
  \texttt{qfsong@purdue.edu} \\
   \AND
  Guang Cheng \\
  Department of Statistics\\
  Purdue University\\
  West Lafayette, IN 47906 \\
  \texttt{chengg@purdue.edu} \\
  % \And
  % Coauthor \\
  % Affiliation \\
  % Address \\
  % \texttt{email} \\
  % \And
  % Coauthor \\
  % Affiliation \\
  % Address \\
  % \texttt{email} \\
}

\begin{document}

\maketitle

\begin{abstract}
Sparse deep learning aims to address the challenge of huge storage consumption by deep neural networks, and to recover the sparse structure of target functions. Although tremendous empirical successes have been achieved, most sparse deep learning algorithms are lacking of theoretical support. On the other hand, another line of works have proposed theoretical frameworks that are computationally infeasible. In this paper, we train sparse deep neural networks with a fully Bayesian treatment under spike-and-slab priors, and develop a set of computationally efficient variational inferences via continuous relaxation of Bernoulli distribution. The variational posterior contraction rate is provided, which justifies the consistency of the proposed variational Bayes method. Notably, our empirical results demonstrate that this variational procedure provides uncertainty quantification in terms of Bayesian predictive distribution and is also capable to accomplish consistent variable selection by training a sparse multi-layer neural network.
\end{abstract}

\section{Introduction}
Dense neural network (DNN) may face various problems despite its huge successes in AI fields. Larger training sets and more complicated network structures improve accuracy in deep learning, but always incur huge storage and computation burdens. For example, small portable devices may have limited resources such as several megabyte memory, while a dense neural networks like ResNet-50 with 50 convolutional layers would need more than 95 megabytes of memory for storage and numerous floating number computation \citep{Cheng2018Model}. It is therefore necessary to compress deep learning models before deploying them on these hardware limited devices. 

In addition, sparse neural networks may recover the potential sparsity structure of the target function, e.g., sparse teacher network in the teacher-student framework \citep{Goldt2019Dynamics, tian2018theoretical}. Another example is from nonparametric regression with sparse target functions, i.e., only a portion of input variables are relevant to the response variable. A sparse network may serve the goal of variable selection \citep{Feng2017Sparse, Liang2018Bayesian, Ye2018variable}, and is also known to be robust to adversarial samples against $l_{\infty}$ and $l_2$ attacks \citep{Guo2018Sparse}. 

Bayesian neural network (BNN), which dates back to \cite{Mackay1992practical,Neal1993Bayesian}, comparing with frequentist DNN,  possesses the advantages of robust prediction via model averaging and automatic uncertainty quantification \citep{Blundell2015weight}. Conceptually, BNN can easily induce sparse network selection by assigning discrete prior over all possible network structures. However, challenges remain for sparse BNN  including inefficient Bayesian computing issue and the lack of theoretical justification. This work aims to resolve these two important bottlenecks simultaneously, by utilizing variational inference approach \citep{Jordan1999Variational, Blei2017variational}. On the computational side, it can reduce the ultra-high dimensional sampling problem of Bayesian computing, to an optimization task which can still be solved by a back-propagation algorithm. On the theoretical side, we provide a proper prior specification, under which the variational posterior distribution converges towards the truth. To the best of our knowledge, this work is the first one that provides a complete package of both theory and computation for sparse Bayesian DNN. 

%automatic neural network architecture search with theoretical guarantee \citep{Polson2018posterior}. Due to its computational efficiency, Variational Inference (VI) \citep{Jordan1999Variational, Blei2017variational} has become a popular alternative against Markov chain Monte Carlo (MCMC) in implementing Bayesian neural network, and its application in deep learning could date back to \cite{graves2011practical, Kingma2014VAE, Rezende2014Stochastic, Blundell2015weight}.

\paragraph{Related work.}
A plethora of methods on sparsifying or compressing neural networks have been proposed \citep{Cheng2018Model, Gale2019state}. The majority of these methods are pruning-based \citep{Han2016Deep, Zhu2018Toprune, Frankle2018the}, which are ad-hoc on choosing the threshold of pruning and usually require additional training and fine tuning. Some other methods could achieve sparsity during training. For example, \cite{Louizos2018learning} introduced $l_0$ regularized learning and \cite{Mocanu2018scalable} proposed sparse evolutionary training. However, the theoretical guarantee and the optimal choice of hyperparameters for these methods are unclear.
% As a more natural solution to enforce sparsity in DNN, \cite{Blundell2015weight, Gal2016Dropout, Molchanov2017Variational, GHosh2018Structured, Deng2019Adaptive} proposed Bayesian methods, but the choice of prior specification still needs to be determined.
As a more natural solution to enforce sparsity in DNN, Bayesian sparse neural network has been proposed by placing prior distributions on network weights: \cite{Blundell2015weight} and \cite{Deng2019Adaptive} considered spike-and-slab priors with a Gaussian and Laplacian spike respectively; Log-uniform prior was used in \cite{Molchanov2017Variational}; \cite{GHosh2018Structured} chose to use the popular horseshoe shrinkage prior. These existing works actually yield posteriors over the dense DNN model space despite applying sparsity induced priors. In order to derive
explicit sparse inference results, users have to additionally determine certain pruning rules on the posterior. On the other hand, theoretical works regarding sparse deep learning have been studied in \cite{Schmidt-Hieber2017Nonparametric}, \cite{Polson2018posterior} and \cite{Cherief2020convergence}, but finding an efficient implementation to close the gap between theory and practice remains a challenge for these mentioned methods.

\paragraph{Detailed contributions.}
% We achieve Bayesian sparse learning by imposing a spike-and-slab prior \citep{George1993variable, Ishwaran2005Spike} on all the weights and biases of a neural network, where a binary inclusion variable is associated with each of them. Since the variational distribution of the inclusion variable indicates whether a weight or a bias is zero, performing this variational inference will automatically sparsitfy the neural network.
We impose a spike-and-slab prior on all the edges (weights and biases) of a neural network, where the spike component and slab component represent whether the corresponding edge is inactive or active, respectively. Our work distinguished
itself from prior works on Bayesian sparse neural network by imposing the spike-and-slab prior with the Dirac spike
function. Hence automatically, all posterior samples are from exact sparse DNN models.
%and the variational distribution of the binary inclusion variable is used as indicator for zero edges. 

More importantly, with carefully chosen hyperparameter values, especially the prior probability that each edge is active, we establish the variational posterior consistency, and the corresponding convergence rate strikes the balance of statistical estimation error, variational error and the approximation error.
%More importantly, the contraction rate of the variational distribution as well as the rate of the generalization error are provided, which justifies the consistency of the proposed variational inference. In addition, with theoretical guarantee, a suggestion for the choice of the hyperparameter is also provided and is validated by the experiments.
The theoretical results are validated by various simulations and real applications. Empirically we also demonstrate that the proposed method possesses good performance of variable selection and  uncertainty quantification.
%As a byproduct, variable selection is achieved during the training of the sparse neural network. 
While \cite{Feng2017Sparse, Liang2018Bayesian, Ye2018variable} only considered the neural network with single hidden layer for variable selection, we observe correct support recovery for neural networks with multi-layer networks.

\section{Preliminaries}

\paragraph{Nonparametric regression.}{\label{secNonparam}}
Consider a nonparametric regression model with random covariates
\begin{equation}{\label{eqmod1}}
   Y_i = f_0(X_i)+\epsilon_i, \: i=1,\ldots,n, 
\end{equation}
where $X_i= (x_{i1}, \ldots, x_{ip})^t \sim \mathcal{U}([-1,1]^p)$\footnote{This compact support assumption is generally satisfied given the standardized data, and may be relaxed.}, 
$\mathcal{U}$ denotes the uniform distribution, $\epsilon_i\overset{iid}{\sim}\mathcal{N}(0,\sigma^2_{\epsilon})$ is the noise term, and $f_0:[-1,1]^p\rightarrow \mathbb{R}$ is an underlying true function. For simplicity of analysis, we assume $\sigma_{\epsilon}$ is known. Denote $D_i=(X_i, Y_i)$ and $D=(D_1,\dots, D_n)$ as the observations. Let $P_0$ denote the underlying probability measure of data, and $p_0$ denote the corresponding density function. %According to the generative model (\ref{eqmod1}), $p_0(D_i)=\psi([Y_i-f_0(X_i)]/\sigma_{\varepsilon})/\sigma_{\varepsilon}$ where $\psi$ is the Gaussian density function.

% In this paper, we assume the unknown $f_0$ belongs to the class of $\alpha$-H{\"o}lder smooth functions $\mathcal{H}^{\alpha}_p$, which is defined as 
% \begin{equation*}
% \begin{split}
% \mathcal{H}^{\alpha}_p = &\Bigl\{f: ||f||^\alpha_{\mathcal{H}}:= \sum_{\kappa:|\kappa|<\alpha}||\partial^{\kappa}f||_{\infty} + \\
% &\sum_{\kappa:|\kappa|=\lfloor \alpha \rfloor}\sup_{\substack{x,y\in[-1,1]^p\\ x \neq y}} \frac{|\partial^{\kappa}f(x)-\partial^{\kappa}f(y)|}{|x-y|_{\infty}^{\alpha - \lfloor \alpha \rfloor}}\leq \infty \Bigr\}.
% \end{split}
% \end{equation*}

\paragraph{Deep neural network.}
%We will first describe the architecture of the deep neural networks. Specifically, we are considering a multilayer NN with an equal number of hidden neurons, which follows \cite{Schmidt-Hieber2017Nonparametric}.

An $L$-hidden-layer neural network is used to model the data. 
The number of neurons in each hidden layer is denoted by $p_i$ for $i=1,\dots, L$. The weight matrix and bias parameter in each layer are denoted by $W_i\in\mathbb{R}^{p_{i-1}\times p_{i}}$ and $b_i\in\mathbb{R}^{p_{i}}$ for $i=1,\dots,L+1$.
Let $\sigma(x)$ be the activation function, and for any $r\in\mathbb{Z}^+$ and any $b\in\mathbb{R}^r$, we define $\sigma_b: \mathbb{R}^r \rightarrow \mathbb{R}^r$ as $\sigma_b(y_j) = \sigma(y_j-b_j)$ for $j=1,\ldots,r$.
% $$
% \sigma_b 
% \begin{bmatrix}
% y_1 \\
% \vdots\\
% y_r
% \end{bmatrix} =
% \begin{bmatrix}
% \sigma(y_1-b_1) \\
% \vdots\\
% \sigma(y_r-b_r)
% \end{bmatrix}.
% $$
Then, given parameters $\boldsymbol{p}=(p_1,\dots,p_L)'$ and $\theta = \{W_1, b_1,\ldots, W_{L}, b_{L}, W_{L+1}, b_{L+1}\}$,
the output of this DNN model can be written as 
\begin{equation}{\label{eqmod2}}
    f_{\theta}(X)=W_{L+1}\sigma_{b_{L}}(W_{L}\sigma_{b_{L-1}}\ldots \sigma_{b_1}(W_1X)) + b_{L+1}.
\end{equation}
In what follows, with slight abuse of notation, $\theta$ is also viewed as a vector that contains all the coefficients in $W_i$'s and $b_i$'s, , i.e., $\theta=(\theta_1,\dots,\theta_T)'$, where the length $T:=\sum^{L-1}_{l=1}p_{l+1}(p_l+1) + p_1(p+1) + (p_L+1)$.

%There are $p$ neurons in the input layer, while the number of neurons in each hidden layer equals $G$, and the set of all neurons is denoted as $\boldsymbol{p}$ with $||\boldsymbol{p}||_{\infty}=H$. Denote the set of parameters as 
\paragraph{Variational inference.}
Bayesian procedure makes statistical inferences from the posterior distribution $\pi(\theta|D)\propto \pi(\theta)p_{\theta}(D)$, where $\pi(\theta)$ is the prior distribution. Since MCMC doesn't scale well on complex Bayesian learning tasks with large datasets, variational inference \citep{Jordan1999Variational, Blei2017variational} has become a popular alternative. Given a variational family of distributions, denoted by $\mathcal Q$ \footnote{For simplicity, it is commonly assumed that $\mathcal{Q}$ is the mean-field family, i.e. $
q(\theta) = \prod^T_{i=1}q(\theta_i)$.}, it seeks to approximate the true posterior distribution by finding a closest member of $\mathcal Q$ in terms of KL divergence: 
\begin{equation}\label{eqvb1}
\widehat{q}(\theta) = \underset{q(\theta) \in \mathcal{Q}}{\mbox{argmin } } \mbox{KL}(q(\theta)||\pi(\theta|D)).
\end{equation}
The optimization ($\ref{eqvb1}$) is equivalent to minimize the negative ELBO, which is defined as
\begin{equation}\label{eqvb3}
\begin{split}
\Omega = -\mathbb{E}_{q(\theta)}[\log p_{\theta}(D)] + \mbox{KL}(q(\theta)||\pi(\theta)),
\end{split}
\end{equation}
where the first term in ($\ref{eqvb3}$) can be viewed as the reconstruction error \citep{Kingma2014VAE} and the second term serves as regularization. Hence, the variational inference procedure minimizes the reconstruction error while being penalized against prior distribution in the sense of KL divergence.

%We next introduce stochastic gradient variational inference. 
When the variational family is indexed by some hyperparameter $\omega$, i.e., any $q\in \mathcal Q$ can be written as $q_\omega(\theta)$, then the negative ELBO is a function of $\omega$ as $\Omega(\omega)$. 
%Denote hyperparameter for the variational distribution of $\theta$ as $\omega$. Being functions of $\omega$, rewrite $q(\theta)$ and $\Omega$ as $q(\theta;\omega)$ and $\Omega(\omega)$, respectively. 
The KL divergence term in ($\ref{eqvb3}$) could usually be integrated analytically, while the reconstruction error requires Monte Carlo estimation. Therefore, the optimization of $\Omega(\omega)$ can utilize the stochastic gradient approach \citep{Kingma2014VAE}. To be concrete, if all distributions in $\mathcal Q$ can be reparameterized as $q_\omega\stackrel{d}{=}g(\omega,\nu)$\footnote{``$\stackrel{d}{=}$'' means equivalence in distribution} for some differentiable function $g$ and random variable $\nu$, then
the stochastic estimator of $\Omega(\omega)$ and its gradient are
\begin{equation} \label{eq:sgvb}
\begin{split}
&\widetilde{\Omega}^m(\omega) = -\frac{n}{m}\frac{1}{K}\sum^m_{i=1}\sum^K_{k=1}\log p_{g(\omega, \nu_k)}(D_i) + \mbox{KL}(q_\omega(\theta)||\pi(\theta))\\
&\nabla_{\omega}\widetilde{\Omega}^m(\omega) = -\frac{n}{m}\frac{1}{K}\sum^m_{i=1}\sum^K_{k=1}\nabla_{\omega}\log p_{g(\omega, \nu_k)}(D_i) + \nabla_{\omega}\mbox{KL}(q_\omega(\theta)||\pi(\theta)),
\end{split}
\end{equation}
where $D_i$'s are randomly sampled data points and $\nu_k$'s are iid copies of $\nu$. Here, $m$ and $K$ are minibatch size and Monte Carlo sample size, respectively. 
% $n$ is the sample size, $m$ is the size of the minibatch randomly drawn from $D$ and $K$ is the Monte Carlo sample size.
% Here, a reparameterization trick \citep{Kingma2014VAE, Rezende2014Stochastic} has been applied. Specifically, we express $\theta$ as a differentiable function of $\omega$ and an auxiliary noise $\nu \sim p(\nu)$, i.e. $\theta = g(\omega, \nu)$.

\section{Sparse Bayesian deep learning with spike-and-slab prior}
% As opposed to pruning or compressing a trained fully connected network, a sparse neural network can be chosen to approximate $f_0$ in equation (\ref{eqmod1}) in the first place. 
We aim to approximate $f_0$ in the generative model (\ref{eqmod1}) by a sparse neural network. Specifically, given a network structure, i.e. the depth $L$ and the width $\boldsymbol{p}$, $f_0$ is approximated by DNN models $f_\theta$ with sparse parameter vector $\theta\in\Theta=\mathbb R^T$. From a Bayesian perspective, we impose a spike-and-slab prior \citep{George1993variable, Ishwaran2005Spike} on $\theta$ to model sparse DNN. 

A spike-and-slab distribution is a mixture of two components: a Dirac spike concentrated at zero and a flat slab distribution. Denote $\delta_{0}$ as the Dirac at 0 and $\gamma = (\gamma_1, \ldots, \gamma_T)$ as a binary vector indicating the inclusion of each edge in the network. The prior distribution $\pi(\theta)$ thus follows:
\begin{equation}\label{prior}
\begin{split}
\theta_i|\gamma_i \sim \gamma_i\mathcal{N}(0, \sigma^2_0) + (1-
\gamma_i)\delta_0, \quad \gamma_i \sim \mbox{Bern}(\lambda), 
% & \lambda \sim \mbox{Beta}(1, B_n)
\end{split}
\end{equation}
for $i=1,\ldots, T$, 
% In other words, we assign uniform prior over all possible $s$-sparse network structures, and uniform distribution for the coefficient of the selected edge.
where $\lambda$ and $\sigma^2_0$ are hyperparameters representing the prior inclusion probability and the prior Gaussian variance, respectively. The choice of $\sigma_0^2$ and $\lambda$ play an important role in sparse Bayesian learning, and in Section \ref{sec:contract}, we will establish theoretical guarantees for the variational inference procedure under proper deterministic choices of $\sigma_0^2$ and $\lambda$. Alternatively, hyperparameters may be chosen via an Empirical Bayesian (EB) procedure, but it is beyond the scope of this work. We assume $\mathcal{Q}$ is in the same family of spike-and-slab laws:
\begin{equation}\label{vbpost}
\begin{split}
  \theta_i|\gamma_i \sim \gamma_i\mathcal{N}(\mu_i, \sigma^2_i)+(1-\gamma_i)\delta_{0}, \quad \gamma_i \sim \mbox{Bern}(\phi_i)
\end{split}
\end{equation}
% where $0\leq \phi_i \leq 1$. Depending on the slab distribution apriori,  $\widetilde{\mathcal{M}}(\theta)$ is either a uniform distribution $\mathcal{U}([l_i, u_i])$ with $-2 \leq l_i\leq u_i \leq 2$ or a normal distribution $\mathcal{N}(\mu_n, \sigma^2_n)$.
for $i=1,\ldots, T$, where $0\leq \phi_i \leq 1$. 

Comparing to pruning approaches \citep[e.g.][]{Zhu2018Toprune, Frankle2018the,Molchanov2017Variational} that don't pursue sparsity among bias parameter $b_i$'s, the Bayesian modeling induces posterior sparsity for both weight and bias parameters. 

In the literature, \cite{Polson2018posterior,Cherief2020convergence} imposed sparsity specification as follows $\Theta(L,\boldsymbol{p},s)=\{\theta \mbox{ as in model } (\ref{eqmod2}): ||\theta||_{0}\leq s\}$ that not only posts great computational challenges, but also requires tuning for optimal sparsity level $s$. For example, \cite{Cherief2020convergence} shows that given $s$, two error terms occur in the variation DNN inference: 1) the variational error $r_n(L, \boldsymbol{p}, s)$ caused by the variational Bayes approximation to the true posterior distribution and 2) the approximation error $\xi_n(L, \boldsymbol{p}, s)$ between $f_0$ and the best bounded-weight $s$-sparsity DNN approximation of $f_0$. Both error terms $r_n$ and $\xi_n$ depend on $s$ (and their specific forms are given in next section). Generally speaking, as the model capacity (i.e., $s$) increases, $r_n$ will increase and $\xi_n$ will decrease. Hence the optimal choice $s^*$ that strikes the balance between these two is
\[s^*= \underset{s}{\mbox{argmin } } \{r_n(L,\boldsymbol{p},s) + \xi_n(L,\boldsymbol{p},s)\}.\]
Therefore, one needs to develop a selection criteria for $\widehat s$ such that $\widehat s\approx s^*$. In contrast, our modeling directly works on the whole sparsity regime without pre-specifying $s$, and is shown later to be capable of automatically attaining the same rate of convergence as if the optimal $s^*$ were known.

\section{Theoretical results} \label{sec:contract}
%\subsection{Contraction rate of the variational posterior} 
In this section, we will establish the contraction rate of the variational sparse DNN procedure, without knowing $s^*$. For simplicity, we only consider equal-width neural network (similar as \cite{Polson2018posterior}). 

The following assumptions are imposed:
\begin{cond} \label{cond:width}
 $p_i\equiv N\in\mathbb Z^{+}$ that can depend on n, and $\lim T=\infty$.
\end{cond}
% \begin{cond} \label{cond:truesparse}
% Let $g = (\frac{\partial f(X)}{\partial x_i}$, \ldots, $\frac{\partial f(X)}{\partial x_p})$, and its number of nonzero elements of $g$ is $p_0(\leq p)$.
% \end{cond}
%Besides, the following condition is imposed on the activation function $\sigma(x)$:
\begin{cond} \label{cond:activate}
  $\sigma(x)$ is 1-Lipschitz continuous. % that is for any $u_1$ and $u_2$ $\in \mathbb{R}$, $|\sigma(u_1) - \sigma(u_2)| \leq |u_1 - u_2|$.
\end{cond}
%Condition \ref{cond:activate} is satisfied by most activation functions in practice including ReLU, sigmoid and tanh.

%Finally, a condition on the hyperpararmeter $\lambda$ is needed.
\begin{cond}{\label{lambda}}
%$\lambda \succeq \min\Bigl\{(pN)^{-(L+1)}, N\sqrt{s^*/n}\Bigl(\sqrt{3}\frac{p_0}{p}+1\Bigr)^{-1} \Bigr\}$,
The hyperparameter $\sigma_0^2$ is set to be some constant, and $\lambda$ satisfies
$\log(1/\lambda) = O\{(L+1)\log N + \log(p\sqrt{n/s^*})\}$ and 
$\log(1/(1-\lambda)) = O((s^*/T)\{(L+1)\log N + \log(p\sqrt{n/s^*})\})$.
%$\lambda \preceq 1 - \min\Bigl\{(pN)^{(L+1)s^*/(s^*-T)}, N^{-1}\sqrt{n/s^*}\Bigl(\sqrt{3}\frac{p_0}{p}+1\Bigr)^{s^*/(s^*-T)}\Bigr\}$.
%  \[
%  \begin{split}
% \min\Bigl\{(pN)^{-(L+1)}, N\sqrt{s^*/n}\Bigl(\sqrt{3}\frac{p_0}{p}+1\Bigr)^{-1} \Bigr\} \preceq \lambda \preceq 1 - \min\Bigl\{&(pN)^{(L+1)s^*/(s^*-T)}, \\ &N^{-1}\sqrt{n/s^*}\Bigl(\sqrt{3}\frac{p_0}{p}+1\Bigr)^{s^*/(s^*-T)}\Bigr\}.
% \end{split}
% \]
\end{cond}
Condition \ref{cond:activate} is very mild, and includes ReLU, sigmoid and tanh. Note that Condition \ref{lambda} gives a wide range choice of $\lambda$, even including the choice of $\lambda$ independent of $s^*$ (See Theorem \ref{mainthm} below).

% Denote 
% $
% \theta^*=\arg\min_{\theta \in \Theta(L,\boldsymbol{p},s^*, B)} ||f_{\theta}-f_0||^2_{\infty}
% $,
% where $\Theta(L,\boldsymbol{p},s^*, B) = \{\theta:\theta \in \Theta(L,\boldsymbol{p},s^*), ||\theta||_{\infty}\leq B\}$  with $B \geq 2$.
% $\theta^*$ is the best frequentist estimate for $\theta$, and bridges from the estimation error $\xi_n$ to the convergence rate of our variational inference to be shown later.
% For all the theoretical results, we only consider the slab distribution being the uniform distribution for simplicity. Due to the same arguments used in the Appendix G in \cite{Cherief2019Convergence}, all the results will also hold for the normal slab distribution.

We first state a lemma on an upper bound for the negative ELBO.
Denote the log-likelihood ratio between $p_0$ and $p_{\theta}$ as 
$l_n(P_0, P_{\theta}) = \log(p_0(D)/p_{\theta}(D)) = \sum^n_{i=1} \log(p_0(D_i)/p_{\theta}(D_i))$. Given some constant $B>0$, we define 
\begin{eqnarray*}
r^*_n&:=&r_n(L, N, s^*)=((L+1)s^*/n)\log N + (s^*/n)\log(p\sqrt{n/s^*}),\\
\xi^*_n&:=&\xi_n(L, N, s^*)= \inf_{\theta \in \Theta(L, \boldsymbol{p}, s^*),\|\theta\|_\infty\leq B}||f_{\theta}-f_0||^2_\infty.
\end{eqnarray*}
Recall that $r_n(L, N, s)$ and $\xi_n(L, N, s)$ denote the variational error and the approximation error. 

\begin{lem}{\label{thmbound2}} 
Under Condition \ref{cond:width}-\ref{lambda}, then with dominating probability, 
\begin{equation}{\label{eqbound2}}
\begin{split}
\inf_{q(\theta) \in \mathcal Q}\Bigl\{ \mbox{KL}(q(\theta)||\pi(\theta|\lambda))
+ \int_{\Theta} l_n(P_0, P_{\theta})q(\theta)d\theta \Bigr\} \leq Cn (r^*_n
+ \xi^*_n )
\end{split}
\end{equation}
% holds, where  $r^*_n:=r_n(L, N, s^*)=\frac{Ls^*}{n}\log(BpN)+\frac{2s^*}{n}\log(BLpN)+\frac{s^*}{n}\log\left(7pL\max(\frac{n}{s^*},1)\right)$, $\xi^*_n := \xi_n(L, N, s^*)= \inf_{\theta \in \Theta(L, \boldsymbol{p}, s^*)}||f_{\theta}-f_0||^2_\infty$ and $M_n$ is any diverging sequence.
% holds, where  $r^*_n:=r_n(L, N, s^*)=\frac{(L+1)s^*}{n}\log(BpN) + \frac{s^*}{n}\log(\sqrt{\frac{n}{N^2s^*}}\{\frac{p_0 + 2}{p}(L+1)+1\})$, $\xi^*_n := \xi_n(L, N, s^*)= \inf_{\theta \in \Theta(L, \boldsymbol{p}, s^*)}||f_{\theta}-f_0||^2_\infty$ and $M_n$ is any diverging sequence.
% where  $r^*_n:=r_n(L, N, s^*)=((L+1)s^*/n)\log(pN) + (s^*/n)\log(\sqrt{n/s^*}(\sqrt{3}p_0/p+1)N^{-1})$, 
where $C$ is either some positive constant if $\lim n(r_n^*+\xi^*_n)=\infty$, or any diverging sequence if $\lim\sup n(r_n^*+\xi^*_n)\neq \infty$.
\end{lem}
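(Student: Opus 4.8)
The plan is to bound the infimum by exhibiting one convenient feasible member $q^*\in\mathcal Q$ and evaluating the bracketed objective at it. Observe first that, since $\int_\Theta l_n(P_0,P_\theta)q(\theta)\,d\theta = \log p_0(D) - \mathbb E_{q}[\log p_\theta(D)]$, the objective equals the negative ELBO $\Omega(\omega)$ from (\ref{eqvb3}) plus the data-dependent constant $\log p_0(D)$; hence any admissible $q^*$ yields a valid upper bound on the infimum. I would pick a target parameter $\theta^*\in\Theta(L,\boldsymbol p,s^*)$ with $\|\theta^*\|_\infty\le B$ that nearly attains the infimum defining $\xi^*_n$, so that $\|f_{\theta^*}-f_0\|^2_\infty\le 2\xi^*_n$, and build $q^*$ in the spike-and-slab family (\ref{vbpost}) concentrated on $\theta^*$: on the $s^*$ coordinates in the support of $\theta^*$ set $\phi_i$ near $1$ and $\mu_i=\theta^*_i$; off the support set $\phi_i$ near $0$; and use a common slab variance $\sigma_i^2=\sigma_n^2$ to be calibrated.

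Next I would split the objective into its two summands. For the KL term I would use the mean-field factorization to write $\mbox{KL}(q^*\|\pi)=\sum_{i=1}^T\mbox{KL}(q_i^*\|\pi_i)$ together with the standard spike-and-slab identity $\mbox{KL}(q_i^*\|\pi_i)=\phi_i\log(\phi_i/\lambda)+(1-\phi_i)\log((1-\phi_i)/(1-\lambda))+\phi_i\,\mbox{KL}(\mathcal N(\mu_i,\sigma_n^2)\|\mathcal N(0,\sigma_0^2))$. The $T-s^*$ inactive coordinates then contribute essentially $(T-s^*)\log(1/(1-\lambda))$, which Condition \ref{lambda} bounds by a constant multiple of $nr^*_n$; the $s^*$ active coordinates contribute essentially $s^*\log(1/\lambda)$, again $O(nr^*_n)$ by Condition \ref{lambda}, plus $s^*$ Gaussian-KL terms that stay controlled once $\sigma_n^2$ is fixed (using $\|\theta^*\|_\infty\le B$ and constant $\sigma_0^2$).

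For the expected log-likelihood-ratio term I would use the Gaussian model (\ref{eqmod1}) to write $\int_\Theta l_n(P_0,P_\theta)q^*(\theta)\,d\theta = \frac{1}{2\sigma_\epsilon^2}\mathbb E_{q^*}\sum_{i=1}^n(f_\theta(X_i)-f_0(X_i))^2 + \frac{1}{\sigma_\epsilon^2}\mathbb E_{q^*}\sum_{i=1}^n\epsilon_i(f_0(X_i)-f_\theta(X_i))$. The cross term has mean zero in $\epsilon$ and is handled by a sub-Gaussian concentration bound, making it lower order on an event of dominating probability. The leading term I would bound by $\mathbb E_{q^*}\|f_\theta-f_0\|^2\le 2\|f_{\theta^*}-f_0\|^2+2\,\mathbb E_{q^*}\|f_\theta-f_{\theta^*}\|^2$, the first piece being $O(\xi^*_n)$ and the second the perturbation incurred by the slab noise, after passing from the empirical norm to the population $L^2(P_0)$ norm by a uniform-deviation (dominating-probability) argument.

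The step I expect to be the main obstacle is controlling the perturbation $\mathbb E_{q^*}\|f_\theta-f_{\theta^*}\|^2$: I would propagate the coordinatewise Gaussian perturbations of variance $\sigma_n^2$ through the $L+1$ layers using the $1$-Lipschitz activation (Condition \ref{cond:activate}) and the bound $\|\theta^*\|_\infty\le B$, obtaining an estimate polynomial in $N$, $L$, $B$ and $\sigma_n^2$; choosing $\sigma_n^2$ sufficiently small (inverse-polynomial in $n,N,L$) forces this perturbation below $O(\xi^*_n+r^*_n)$ while keeping the Gaussian-KL contribution within $O(nr^*_n)$. Assembling the three bounds then gives the stated $Cn(r^*_n+\xi^*_n)$, with $C$ a fixed constant when $n(r^*_n+\xi^*_n)\to\infty$ (so that the stochastic remainders are genuinely lower order) and otherwise absorbed into a slowly diverging sequence when $n(r^*_n+\xi^*_n)$ fails to diverge.
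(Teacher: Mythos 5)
Your proposal is correct and takes essentially the same route as the paper's proof: the paper likewise constructs $q^*$ as a spike-and-slab member of $\mathcal Q$ concentrated at $\theta^*$ with $\phi^*_i=1(\theta^*_i\neq 0)$ and an inverse-polynomially small slab variance, bounds $\mbox{KL}(q^*\|\pi)$ by $C n r^*_n$ via Condition \ref{lambda}, and splits $\int l_n(P_0,P_\theta)q^*(\theta)d\theta$ into the squared-error term plus a mean-zero Gaussian cross term of order $O_p(\sqrt{\mathcal R_1})$, absorbed into a constant or a diverging sequence exactly as you describe. The only cosmetic differences are that the paper delegates your ``main obstacle'' (propagating the slab perturbation through the $L+1$ layers) to Appendix G of \cite{Cherief2020convergence} with an explicit variance choice, and that it works throughout with $\|f_\theta(X)-f_0(X)\|^2_2\leq n\|f_\theta-f_0\|^2_\infty$, so the empirical-to-population uniform-deviation step you anticipate is not needed.
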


Noting that $\mbox{KL}(q(\theta)||\pi(\theta|\lambda))
+ \int_{\Theta} l_n(P_0, P_{\theta})q(\theta)(d\theta)$ is the negative ELBO up to a constant, we therefore show the optimal loss function of the proposed variational inference is bounded.

The next lemma investigates the convergence  of the variational distribution under the Hellinger distance, which is defined as 
$$d^2(P_{\theta}, P_0) = \mathbb{E}_X\Bigl( 1 - \exp \{-[f_{\theta}(X) - f_0(X)]^2/(8\sigma^2_{\epsilon}) \}\Bigr).$$ In addition, let $s_n=s^*\log^{2\delta-1}(n)$ for any $\delta>1$. An assumption on $s^*$ is required to strike the balance between $r_n^*$ and $\xi^*$:
\begin{cond} \label{cond:sieve}
%$s_n = s^*\log^{2\delta-1} n$, 
 $\max\{s^*\log(p\sqrt{n/s^*}, (L+1)s^*\log N\} = o(n)$ and $r_n^*\asymp \xi_n^*$.
\end{cond}

\begin{lem}{\label{lmbound1}}
Under Conditions \ref{cond:width}-\ref{cond:sieve}, if $\sigma_0^2$ is set to be constant and $\lambda\leq T^{-1}\exp\{-M nr_n^*/s_n\}$ for any positive diverging sequence $M\rightarrow\infty$, then with dominating probability, we have
\begin{equation}\label{eqbound1}
\begin{split}
\int_\Theta d^2(P_{\theta}, P_0)\widehat{q}(\theta)d\theta \leq C\varepsilon^{*2}_n + \frac{3}{n}\inf_{q(\theta) \in \mathcal Q}\Bigl\{ \mbox{KL}(q(\theta)||\pi(\theta|\lambda))
+ \int_{\Theta} l_n(P_0, P_{\theta})q(\theta)d\theta \Bigr\},
\end{split}
\end{equation}
where $C$ is some constant, and $$\varepsilon^*_n :=\varepsilon_n(L,N,s^*)=\sqrt{r_n(L,N,s^*)}\log^\delta(n), \mbox{ for any } \delta > 1.$$
% for any $\delta > 1$.
%\begin{equation*} 
%\varepsilon^*_n=n^{-1/2}\sqrt{(L+1)s^*\log(pN) + s^*\log(\sqrt{n/s^*}(\sqrt{3}\frac{p_0}{p}+1)N^{-1})}\log^\delta(n),
%\end{equation*}
%for any $\delta > 1$.
\end{lem}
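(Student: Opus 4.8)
The plan is to prove the pointwise inequality
\begin{equation*}
\int_\Theta d^2(P_\theta,P_0)\,\widehat q(\theta)\,d\theta \;\le\; C\varepsilon_n^{*2} + \frac{3}{n}\Bigl\{\mbox{KL}(\widehat q\|\pi(\theta|\lambda)) + \int_\Theta l_n(P_0,P_\theta)\widehat q(\theta)d\theta\Bigr\},
\end{equation*}
and then to observe that, since $\widehat q$ is by construction the minimizer of the negative ELBO (equivalently of the bracketed objective over $\mathcal Q$), the right-hand side already equals the infimum appearing in (\ref{eqbound1}), so (\ref{eqbound1}) follows. The engine of the whole argument is the Gaussian affinity identity for model (\ref{eqmod1}): a direct per-observation computation gives $\mathbb E_{P_0}\sqrt{p_\theta/p_0}=1-d^2(P_\theta,P_0)$, hence $\mathbb E_{P_0}\sqrt{p_\theta(D)/p_0(D)}=(1-d^2)^n\le e^{-n d^2(P_\theta,P_0)}$, which is exactly what converts Hellinger separation into exponential decay of the integrated likelihood ratio.

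First I would apply the Donsker--Varadhan change-of-measure inequality to $\widehat q$ with the choice $h(\theta)=\kappa\, n\, d^2(P_\theta,P_0)-l_n(P_0,P_\theta)$ for a constant $\kappa\in(0,1)$, yielding
\begin{equation*}
\kappa n\!\int_\Theta\! d^2(P_\theta,P_0)\widehat q \;\le\; \mbox{KL}(\widehat q\|\pi(\theta|\lambda))+\int_\Theta l_n\,\widehat q + \log\!\int_\Theta e^{\kappa n d^2(P_\theta,P_0)}\frac{p_\theta(D)}{p_0(D)}\pi(\theta|\lambda)d\theta .
\end{equation*}
The first two terms are precisely the ELBO objective evaluated at $\widehat q$, so the entire difficulty is relocated into bounding the final log-integral with dominating $P_0$-probability.

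To control that log-integral I would split $\Theta$ into a sieve $\Theta_n=\{\theta:\|\theta\|_0\le s_n,\ \|\theta\|_\infty\le B_n\}$ and its complement, the sparsity level $s_n=s^*\log^{2\delta-1}(n)$ being chosen so that the metric entropy obeys $\log\mathcal N(\varepsilon,\Theta_n,d)\lesssim n\varepsilon_n^{*2}$ while $\Theta_n$ still contains a good approximant. On $\Theta_n$, Le Cam/Birg\'e testing combined with the affinity identity furnishes tests $\phi_n$ with $\mathbb E_{P_0}\phi_n\le e^{-c n\varepsilon_n^{*2}}$ and $\sup_{\theta\in\Theta_n,\,d>M\varepsilon_n^*}\mathbb E_{P_\theta}(1-\phi_n)\le e^{-c n d^2}$ for some $c$ of order $1/2$; writing $e^{\kappa n d^2}p_\theta/p_0 = e^{\kappa n d^2}p_\theta/p_0\,(\phi_n+(1-\phi_n))$, the $\phi_n$ part is killed by the type-I bound after a Markov step, while on the $(1-\phi_n)$ part the identity $\mathbb E_{P_0}[(1-\phi_n)p_\theta/p_0]=\mathbb E_{P_\theta}(1-\phi_n)\le e^{-cnd^2}$ makes $e^{(\kappa-c)nd^2}$ summable precisely because $\kappa<c$, leaving only the near region $\{d\le M\varepsilon_n^*\}$ where $e^{\kappa n d^2}\le e^{\kappa n M^2\varepsilon_n^{*2}}$ and $\mathbb E_{P_0}\int p_\theta/p_0\,\pi\le 1$. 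On $\Theta_n^c$ the crude bound $e^{\kappa n d^2}\le e^{\kappa n}$ is absorbed by the prior mass: the assumed $\lambda\le T^{-1}\exp\{-M n r_n^*/s_n\}$ forces $\pi(\Theta_n^c|\lambda)$ to be super-exponentially small (via a union bound over configurations with more than $s_n$ active edges), so $e^{\kappa n}\pi(\Theta_n^c|\lambda)$ is negligible. Collecting the three pieces bounds the log-integral by $C' n\varepsilon_n^{*2}$; dividing by $\kappa n$ with the natural choice $\kappa=1/3$ (admissible since $\kappa<c$) yields exactly the coefficient $3/n$ together with the $C\varepsilon_n^{*2}$ term.

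The main obstacle I anticipate is the exponent bookkeeping in this log-integral: $\kappa$ must be strictly below the test exponent $c$ for the far region to converge, yet its reciprocal must land on the stated constant $3$ after division, and simultaneously the $\lambda$-driven bound on $\pi(\Theta_n^c|\lambda)$ must beat the $e^{\kappa n}$ blow-up off the sieve. Reconciling all three exponential budgets---test errors $\asymp e^{-cn\varepsilon_n^{*2}}$, the prior complement mass, and the near-region Markov bound---against the single rate $\varepsilon_n^{*2}=r_n^*\log^{2\delta}(n)$ is where the logarithmic inflation $s_n=s^*\log^{2\delta-1}(n)$ and Condition \ref{cond:sieve} (supplying $r_n^*\asymp\xi_n^*$ and $n\varepsilon_n^{*2}=o(n)$) become essential and must be tracked carefully.
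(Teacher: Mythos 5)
Your Donsker--Varadhan step, the sieve $\Theta_n=\{\theta:\|\theta\|_0\le s_n\}$, the testing construction on it, and the closing identification of the two ELBO terms with the infimum over $\mathcal Q$ all match the paper's machinery (Lemmas \ref{lmtesting} and \ref{lmdonsker}). The fatal gap is your off-sieve step: the claim that ``$e^{\kappa n d^2}\le e^{\kappa n}$ is absorbed by the prior mass'' fails under the lemma's hypotheses. Under the prior, $\|\theta\|_0\sim \mbox{Bin}(T,\lambda)$, and the hypothesis $\lambda\le T^{-1}\exp\{-Mnr_n^*/s_n\}$ gives via the Chernoff bound only $\pi(\Theta_n^c|\lambda)\lesssim \exp\{s_n-Mnr_n^*-s_n\log s_n\}$. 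Moreover $\lambda$ cannot be taken much smaller, since Condition \ref{lambda} (which you must retain, because Lemma \ref{thmbound2} is what makes the right-hand side of (\ref{eqbound1}) useful) caps $\log(1/\lambda)$ at $O\left((L+1)\log N+\log(p\sqrt{n/s^*})\right)=O(nr_n^*/s^*)$; hence the best achievable prior tail is of order $\exp\{-c\,s_n\,nr_n^*/s^*\}=\exp\{-c\,nr_n^*\log^{2\delta-1}n\}=\exp\{-c\,n\varepsilon_n^{*2}/\log n\}$, which is $e^{-o(n)}$ whenever $\varepsilon_n^*=o(1)$ (the concrete $\lambda$ of Theorem \ref{mainthm} corresponds to $M\asymp\log^{2\delta-1}n$). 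Consequently $e^{\kappa n}\pi(\Theta_n^c|\lambda)=e^{\kappa n(1-o(1))}\to\infty$, your log-integral is of order $\kappa n$ rather than $C'n\varepsilon_n^{*2}$, and after dividing by $\kappa n$ you are left with a vacuous $O(1)$ bound on $\int_\Theta d^2(P_\theta,P_0)\widehat q(\theta)d\theta$ (recall $d^2\le 1$). Beating $e^{\kappa n}$ would require $\log(1/\lambda)\gtrsim n/s_n$, which is incompatible with the $o(n/s_n)$ budget above; this is exactly the obstruction the paper's Remark after the lemma flags --- no adequate global test/prior-mass argument exists over all of $\Theta$.

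The paper's escape, and the idea missing from your proposal, is to apply the hypothesis on $\lambda$ to the \emph{variational posterior} mass rather than the prior mass. One first splits the Hellinger integral itself, bounding $\int_{\Theta_n^c}d^2(P_\theta,P_0)\widehat q(\theta)d\theta\le \widehat q(\Theta_n^c)$ trivially, and runs Donsker--Varadhan with \emph{both} $\pi$ and $\widehat q$ truncated to $\Theta_n$, where the tests of Lemma \ref{lmtesting} do exist. The crux is then Lemma \ref{5term}: ELBO optimality of $\widehat q$ combined with Lemma \ref{thmbound2} gives $\mbox{KL}(\widehat q\|\pi(\cdot|\lambda))\le Cnr_n^*$ w.h.p.; the coordinatewise inequality $\mbox{KL}(\widehat q\|\pi)\ge\sum_{i=1}^T\mbox{KL}(\widehat q(\gamma_i)\|\pi(\gamma_i))$ transfers the prior's sparsity to $\widehat q$ (this is precisely where $\lambda\le T^{-1}e^{-Mnr_n^*/s_n}$ enters), and Bernstein plus a Poisson Chernoff tail then yield $\widehat q(\Theta_n^c)=O(\varepsilon_n^{*2})$. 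The truncation also produces correction terms that have no counterpart in your plan --- notably $\frac{3}{n}\int_{\Theta_n^c}l_n(P_\theta,P_0)\widehat q(\theta)d\theta$, handled by a Gaussian-tail argument with $M_n^2=n\varepsilon_n^{*2}$, and two entropy-like terms bounded via $\sup_{x\in(0,1)}x\log(1/x)<\infty$ --- so a complete proof along these lines must track these pieces as well.
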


\begin{remark}
The result (\ref{eqbound1}) is of exactly the same form as in the existing literature \citep{Pati2018on}, but it is not trivial in the following sense. The existing literature require the existence of a global testing function that separates $P_0$ and $\{P_{\theta}:d(P_{\theta}, P_0) \geq \varepsilon_n^*\}$ with exponentially decay rate of Type I and Type II errors. If such a testing function exists only over a subset $\Theta_n\subset\Theta$ (which is the case for our DNN modeling), then the existing result \citep{Yang2020alpha} can only characterize the VB posterior contraction behavior within $\Theta_n$, but not over the whole parameter space $\Theta$. Therefore our result, which characterizes the convergence behavior for the overall VB posterior, represents a significant improvement beyond those works.

%For our model, such a global testing function could not be found on $\mathcal{F}(L, \boldsymbol{p}, T)$. Instead, we showed the existence of a testing function on a sieve set $\mathcal{F}(L, \boldsymbol{p}, s_n) \subset \mathcal{F}(L, \boldsymbol{p}, T)$ for some $s_n$. In this case, \cite{Yang2020alpha} could only obtain the contraction rate by truncating the VB posterior, while in our work we resolved these issue without any truncation by applying new techniques. Therefore, our result is a significant theoretical improvement to those existing works. 
\end{remark}

% To prove Lemma $\ref{lmbound1}$, on a sieve set $\mathcal{F}(L, \boldsymbol{p}, s_n) \subset \mathcal{F}(L, \boldsymbol{p}, T)$ for some $s_n$, we showed the existence of a testing function that separates $P_0$ and $\{P_{\theta}:d(P_{\theta}, P_0) \geq \varepsilon_n^*, \theta \in \mathcal{F}(L, \boldsymbol{p}, s_n)\}$ with exponentially decay rate of Type I and Type II errors. Despite the fact that no global testing function can be found on  $\mathcal{F}(L, \boldsymbol{p}, T)$, our result doesn't need to truncate the VB posterior, which is a significant improvement to \cite{Yang2020alpha}.

% \textcolor{red}{Add a remark comment that our theoretical result doesn't truncate the VB posterior, despite that no global testing function is find. This is a significant improvement to Yang Yun's work. We would like to find a sieve set $\mathcal{F}(L, \boldsymbol{p}, s_n) \subset \mathcal{F}(L, \boldsymbol{p}, T)$, such that the statistical identifiability of $\theta \in \mathcal{F}(L, \boldsymbol{p}, s_n)$ under Hellinger distance is ensured. This will be done by showing the existence of a testing function that separates $P_0$ and $\{P_{\theta}:d(P_{\theta}, P_0) \geq \varepsilon_n^*, \theta \in \mathcal{F}(L, \boldsymbol{p}, s_n)\}$ with exponentially decay rate of Type I and Type II errors, where
% . The existence of such a sieve set is guaranteed by the following condition}

The above two lemmas together imply the following guarantee for VB posterior:
\begin{thm}\label{mainthm}
Let $\sigma_0^2$ be a constant and $-\log\lambda =\log(T)+\delta[(L+1)\log N+\log\sqrt np]$ for any constant $\delta>0$. Under Conditions \ref{cond:width}-\ref{cond:activate}, \ref{cond:sieve}, 
we have with high probability $$\int_{\Theta} d^2(P_{\theta}, P_0)\widehat{q}(\theta)d\theta \leq C\varepsilon^{*2}_n + C'(r_n^* +\xi^*_n),$$ where $C$ is some positive constant and $C'$ is any diverging sequence.
\end{thm}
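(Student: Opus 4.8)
The plan is to read Theorem \ref{mainthm} as an immediate corollary of Lemmas \ref{thmbound2} and \ref{lmbound1}, so that essentially all of the work reduces to checking that the explicit prior inclusion probability stated in the theorem, $-\log\lambda = \log T + \delta[(L+1)\log N + \log(p\sqrt n)]$ (here $\log(p\sqrt n)$ is the theorem's $\log\sqrt np$), is admissible for both lemmas. I would first verify Condition \ref{lambda}, which both lemmas require and which imposes the upper bound $\log(1/\lambda) = O\{(L+1)\log N + \log(p\sqrt{n/s^*})\}$ (i.e. $\lambda$ must not be too small). Writing $\log(p\sqrt n) = \log(p\sqrt{n/s^*}) + \tfrac12\log s^*$ and using the crude order bound $\log T = O((L+1)\log N + \log p)$ valid for equal-width networks, it remains to show $\tfrac12\log s^* = O\{(L+1)\log N + \log(p\sqrt{n/s^*})\}$; this is where I would invoke Condition \ref{cond:sieve} to control the magnitude of $\log s^*$. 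The matching bound on $\log(1/(1-\lambda))$ is immediate since $1-\lambda\to 1$.

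Next I would check the sharper lower bound demanded by Lemma \ref{lmbound1}, namely $\lambda \le T^{-1}\exp\{-M\,nr_n^*/s_n\}$ for some positive diverging $M$ (i.e. $\lambda$ must be small enough). Using $nr_n^*/s^* = (L+1)\log N + \log(p\sqrt{n/s^*})$ and $s_n = s^*\log^{2\delta-1}(n)$, one has $nr_n^*/s_n = [(L+1)\log N + \log(p\sqrt{n/s^*})]/\log^{2\delta-1}(n)$. Comparing with $-\log\lambda - \log(1/T) = \delta[(L+1)\log N + \log(p\sqrt n)]$ and using $\log(p\sqrt n)\ge\log(p\sqrt{n/s^*})$, the required inequality holds with $M \asymp \log^{2\delta-1}(n)\to\infty$, where the exponent $2\delta-1>1$ comes from the choice $\delta>1$ embedded in the definitions of $s_n$ and $\varepsilon_n^*$. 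The point of the theorem's specific $\lambda$ is precisely that it threads the needle between this lower bound and the Condition \ref{lambda} upper bound simultaneously.

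With both lemmas in force, the conclusion follows by substitution. Lemma \ref{lmbound1} gives, with dominating probability, $\int_\Theta d^2(P_\theta,P_0)\widehat q(\theta)\,d\theta \le C\varepsilon_n^{*2} + \tfrac3n \inf_{q\in\mathcal Q}\{\mathrm{KL}(q\|\pi(\cdot|\lambda)) + \int_\Theta l_n(P_0,P_\theta)q\,d\theta\}$, and Lemma \ref{thmbound2} bounds the same infimum by $Cn(r_n^*+\xi_n^*)$ with dominating probability. On the intersection of these two high-probability events (handled by a union bound), the $\tfrac3n$ cancels the $n$, yielding $\int_\Theta d^2(P_\theta,P_0)\widehat q(\theta)\,d\theta \le C\varepsilon_n^{*2} + C'(r_n^*+\xi_n^*)$. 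The constant $C'$ inherits the dichotomy of Lemma \ref{thmbound2}: a fixed positive constant when $n(r_n^*+\xi_n^*)\to\infty$, and otherwise any diverging sequence, matching the statement exactly.

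I expect the main obstacle to be the order bookkeeping in the first two steps rather than the final assembly. The delicate point is reconciling the $\log(p\sqrt n)$ appearing in the theorem's $\lambda$ with the $\log(p\sqrt{n/s^*})$ used in the conditions and in $r_n^*$: their discrepancy is $\tfrac12\log s^*$, which must be absorbed into the dominant $(L+1)\log N + \log(p\sqrt{n/s^*})$ scale. Verifying this, together with confirming that the extra $\log T$ term is negligible and that $r_n^*\asymp\xi_n^*$ so the stated rate is genuinely the balanced one, is exactly what Condition \ref{cond:sieve} is designed to guarantee; everything beyond it is a mechanical combination of the two lemmas.
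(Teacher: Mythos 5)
Your proposal is correct and takes essentially the same route as the paper, which proves the theorem as a direct consequence of Lemma \ref{thmbound2} and Lemma \ref{lmbound1} after checking that the stated $\lambda$ satisfies both Condition \ref{lambda} and the bound $\lambda\le T^{-1}\exp\{-Mnr_n^*/s_n\}$, with $M\asymp \log^{2\delta-1}(n)$ diverging exactly as you compute. Your write-up is in fact more explicit than the paper's (which dismisses the $\lambda$-verification as routine); the only nits are the harmless sign slip $-\log\lambda-\log(1/T)$ for $-\log\lambda-\log T$, and that bounding $\tfrac12\log s^*$ is cleanest via $s^*\le T$ together with your bound $\log T=O((L+1)\log N+\log p)$ rather than via Condition \ref{cond:sieve} alone.
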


The $\varepsilon_n^{*2}$ denotes the estimation error from the statistical estimator for $P_0$. The variational Bayes convergence rate consists of estimating error, i.e., $\varepsilon_n^{*2}$, variational error, i.e., $r_n^*$, and approximation error, i.e., $\xi_n^*$. Given that the former two errors have only logarithmic difference, our convergence rate actually strikes the balance among all three error terms. The derived convergence rate has an explicit expression in terms of the network structure based on the forms of $\varepsilon_n^*$, $r_n^*$ and $\xi_n^*$, in contrast with general convergence results in \cite{Pati2018on,Zhang2019Convergence, Yang2020alpha}.

\begin{remark}
Theorem \ref{mainthm} provides a specific choice for $\lambda$, which can be relaxed to the general conditions on $\lambda$ in Lemma \ref{lmbound1}. In contrast to the heuristic choices such as $\lambda = \exp(-2\log n)$ \citep[BIC;][]{Hubin2019Combining}, this theoretically justified choice incorporates knowledge of input dimension, network structure and sample size. Such an $\lambda$ will be used in our numerical experiments in Section \ref{simu}, but readers shall be aware of that its theoretical validity is only justified in an asymptotic sense.
%In practice, it is not necessary to choose such a tiny $\lambda$, since it is merely a sufficient choice. Our experiments suggest $\lambda =N^{-(L+1)}$, which leads to a robust and satisfactory result.
\end{remark}

\begin{remark}
The convergence rate is derived under Hellinger metric, which is of less practical relevance than $L_2$ norm representing the common prediction error. One may obtain a convergence result under $L_2$ norm via a VB truncation (refer to supplementary material, Theorem A.1).
\end{remark}

\begin{remark}
If $f_0$ is an $\alpha$-H{\"o}lder-smooth function with fixed input dimension $p$, then by choosing some $L\asymp\log n$, $N\asymp n/\log n$, combining with the approximation result \citep[Theorem 3]{Schmidt-Hieber2017Nonparametric}, our theorem ensures rate-minimax convergence up to a logarithmic term.
\end{remark}

\section{Implementation}
To conduct optimization of (\ref{eqvb3}) via stochastic gradient optimization, we need to find certain reparameterization for any distribution in $\mathcal Q$.
One solution is to use the inverse CDF sampling technique. Specifically, if $\theta\sim q \in \mathcal Q$, its marginal $\theta_i$'s are independent mixture of (\ref{vbpost}). Let $F_{(\mu_i,\sigma_i,\phi_i)}$ be the CDF of $\theta_i$, then $\theta_i \stackrel{d}{=} F^{-1}_{(\mu_i,\sigma_i,\phi_i)}(u_i)$ holds where $u_i \sim \mathcal{U}(0, 1)$. This inverse CDF reparameterization, although valid, can not be conveniently implemented within the state-of-art python packages like PyTorch. Rather, a more popular way in VB is to utilize the Gumbel-softmax approximation.

We rewrite the loss function $\Omega$ as
\begin{equation} \label{eq:loss}
\begin{split}
    -\mathbb{E}_{q(\theta|\gamma)q(\gamma)}[\log p_{\theta}(D)] + \sum^T_{i=1}\mbox{KL}(q(\gamma_i)||\pi(\gamma_i)) + \sum^T_{i=1}q(\gamma_i=1)\mbox{KL}(\mathcal{N}(\mu_i, \sigma^2_i)||\mathcal{N}(0, \sigma^2_0)).
\end{split}
\end{equation}
Since it is impossible to reparameterize the discrete variable $\gamma$ by a continuous system, we apply continuous relaxation, i.e., to approximate $\gamma$ by a continuous distribution. In particular, the Gumbel-softmax approximation \citep{Maddison17Concrete, Jang17Categorical} is used here, and $\gamma_i\sim \mbox{Bern}(\phi_i)$ is approximated by $\widetilde{\gamma}_i\sim\mbox{Gumbel-softmax}(\phi_i, \tau)$, where
\[
\begin{split}
    &\widetilde{\gamma}_i  =(1+\exp(-\eta_i/ \tau))^{-1}, \quad \eta_i = \log \frac{\phi_i}{1 - \phi_i} + \log \frac{u_i}{1-u_i}, \quad u_i \sim \mathcal{U}(0,1).
\end{split}
\]
$\tau$ is called the temperature, and as it approaches 0, $\tilde{\gamma}_i$ converges to $\gamma_i$ in distribution (refer to Figure 4 in the supplementary material). In addition, one can show that $P(\widetilde{\gamma}_i > 0.5) = \phi_i$, which implies
$\gamma_i \stackrel{d}{=} 1(\widetilde{\gamma}_i > 0.5)$. Thus, $\widetilde{\gamma}_i$ is viewed as a soft version of $\gamma_i$, %\footnote{A graphical illustration demonstrating that $\widetilde{\gamma}_i$ is a good approximation of $\gamma_i$ under proper choices of $\tau$ can be found in Fig 1 of the supplementary material.},
and will be used in the backward pass to enable the calculation for gradients, while the hard version $\gamma_i$ will be used in the forward pass to obtain a sparse network structure.  %Note that $\tau \leq 1$ ensures convexity \citep[Proposition 2]{Maddison17Concrete}, 
In practice, $\tau$ is usually chosen no smaller than 0.5 for numerical stability.
Besides, the normal variable $\mathcal{N}(\mu_i, \sigma^2_i)$ is reparameterized by $\mu_i + \sigma_i\epsilon_i$ for $\epsilon_i \sim \mathcal{N}(0, 1)$. 

The complete variational inference procedure with Gumbel-softmax approximation is stated below.
\begin{algorithm}[H]
\caption{Variational inference for sparse BNN with normal slab distribution.} \label{alg:vb}
\begin{algorithmic}[1]
\State {parameters: $\omega=(\mu, \sigma', \phi')$ }, \\
where $\sigma_i=\log(1+\exp(\sigma'_i))$, $\phi_i = (1+\exp(\phi'_i))^{-1}$, for $i=1,\ldots, T$
\Repeat
    \State {$D^m$ $\gets$ Randomly draw a minibatch of size $m$ from $D$}
    \State {$\epsilon_i, u_i$ $\gets$ Randomly draw $K$ samples from $\mathcal{N}(0,1)$ and  $\mathcal{U}(0,1)$} 
    \State {$\widetilde{\Omega}^m(\omega)$ $\gets$ Use (\ref{eq:sgvb}) with ($D^m$, $\omega$, $\epsilon$, $u$); Use $\gamma$ in the forward pass}
    \State {$\nabla_{\omega}\widetilde{\Omega}^m(\omega)$ $\gets$ Use (\ref{eq:sgvb}) with ($D^m$, $\omega$, $\epsilon$, $u$); Use $\widetilde{\gamma}$ in the backward pass }
    \State {$\omega$ $\gets$ Update with $\nabla_{\omega}\widetilde{\Omega}^m(\omega)$ using gradient descent algorithms (e.g. SGD or Adam)}
\Until{convergence of $\widetilde{\Omega}^m(\omega)$}
\State \Return {$\omega$}
\end{algorithmic}
\end{algorithm}
\vskip -0.1in

The Gumbel-softmax approximation introduces an additional error that may jeopardize the validity of Theorem \ref{mainthm}. Our exploratory studies (refer to Section B.3 in supplementary material) demonstrates little differences between the results of using inverse-CDF reparameterization and using Gumbel-softmax approximation in some simple model. Therefore, we conjecture that Gumbel-softmax approximation doesn't hurt the VB convergence, and thus will be implemented in our numerical studies.

\section{Experiments}\label{simu}
We evaluate the empirical performance of the proposed variational inference through simulation study and MNIST data application. For the simulation study, we consider a teacher-student framework and a nonlinear regression function, by which we justify the consistency of the proposed method and validate the proposed choice of hyperparameters.
As a byproduct, the performance of uncertainty quantification and the effectiveness of variable selection will be examined as well. %Finally, we show the performance of the variational inference on real datasets. 

For all the numerical studies, we let $\sigma^2_0=2$, the choice of $\lambda$ follows Theorem \ref{mainthm} (denoted by $\lambda_{opt}$): $\log(\lambda^{-1}_{opt})=\log(T)+0.1[(L+1)\log N+\log\sqrt np]$. The remaining details of implementation (such as initialization, choices of $K$, $m$ and learning rate) are provided in the supplementary material. We will use VB posterior mean estimator $\widehat{f}_H=\sum^{H}_{h=1}f_{\theta_h}/H$ to assess the prediction accuracy, where $\theta_h \sim \widehat{q}(\theta)$ are samples drawn from the VB posterior and $H=30$. The posterior network sparsity is measured by $\widehat{s} = \sum^T_{i=1}\phi_i/T$. Input nodes who have connection with $\phi_i>0.5$ to the second layer is selected as relevant input variables, and we report the corresponding false positive rate (FPR) and false negative rate (FNR) to evaluate the variable selection performance of our method.

Our method will be compared with the dense variational BNN (VBNN) \citep{Blundell2015weight} with independent centered normal prior and independent normal variational distribution, the AGP pruner \citep{Zhu2018Toprune}, the Lottery Ticket Hypothesis (LOT) \citep{Frankle2018the}, the variational dropout (VD) \citep{Molchanov2017Variational} and the Horseshoe BNN (HS-BNN) \citep{GHosh2018Structured}. In particular, VBNN can be regarded as a baseline method without any sparsification or compression.
% where the prior and variational distributions for $\theta$ are chosen as independent centered normal distributions and independent normal distributions respectively.
All reported simulation results are based on 30 replications (except that we use 60 replications for interval estimation coverages). Note that the sparsity level in methods AGP and LOT are user-specified. Hence, in simulation studies, we try a grid search for AGP and LOT, and only report the ones that yield highest testing accuracy. Furthermore, note that FPR and FNR are not calculated for HS-BNN since it only sparsifies the hidden layers nodewisely.

% \subsection{Simulation study}

\paragraph{Simulation I: Teacher-student networks setup} %\label{sec:teacher}
We consider two teacher network settings for $f_0$: (A) densely connected with a structure of 20-6-6-1, $p=20$, $n=3000$, $\sigma(x)=\mbox{sigmoid}(x)$, $X \sim \mathcal{U}([-1, 1]^{20})$, $\epsilon \sim \mathcal{N}(0, 1)$ and network parameter $\theta_i$ is randomly sampled from $\mathcal{U}(0, 1)$; (B) sparsely connected as shown in Figure \ref{fig:sparseteacher_lambda} (c), $p=100$, $n=500$, $\sigma(x)=\mbox{tanh}(x)$, $X \sim \mathcal{U}([-1, 1]^{100})$ and $\epsilon \sim \mathcal{N}(0, 1)$, the network parameter $\theta_i$'s are fixed (refer to supplementary material for details).

\begin{figure*}[t] 
    \centering
    \begin{subfigure}[b]{.3\textwidth}
        \centering
        \includegraphics[width=1\linewidth]{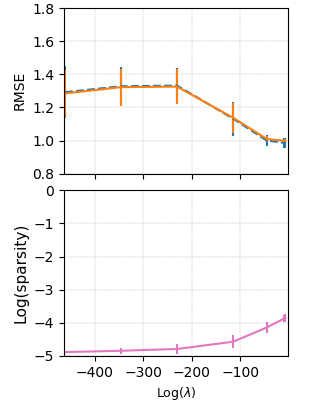}
        \caption{$\lambda \leq \lambda_{opt}$.}
    \end{subfigure}
        \begin{subfigure}[b]{.3\textwidth}
        \centering
        \includegraphics[width=1\linewidth]{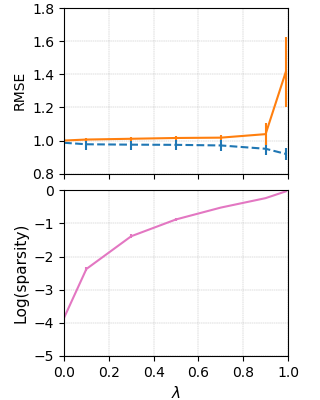}
        \caption{$\lambda \geq \lambda_{opt}$.}
    \end{subfigure}
    \begin{subfigure}[b]{.3\textwidth}
        \centering
        \includegraphics[width=1\linewidth]{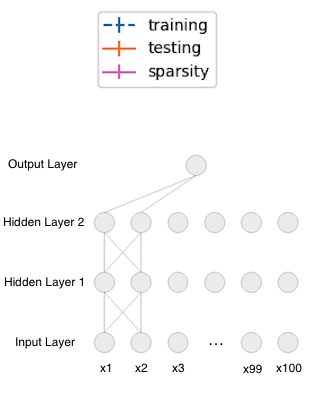}
        \caption{Sparse teacher network.}
    \end{subfigure}%
    \caption{(a) $\lambda= \{10^{-200}, 10^{-150}, 10^{-100}, 10^{-50}, 10^{-20}, 10^{-5}, \lambda_{opt}\}$. (b) $\lambda= \{\lambda_{opt}, 0.1, 0.3, 0.5, $ $0.7 , 0.9, 0.99\}$.  (c) The structure of the target sparse teacher network. Please note that the $x$ axes of figures (a) and (b) are in different scales.}
    \label{fig:sparseteacher_lambda}
\end{figure*}

\begin{table}[b]
\caption{Simulation results for Simulation I. SVBNN represents our sparse variational BNN method. The sparsity levels specified for AGP are 30\% and 5\%, and for LOT are 10\% and 5\%, respectively for the two cases.}
\label{tb:teacher}
\scriptsize
% \footnotesize
\centering
\begin{tabular}{rlcccccc}
\toprule
\multicolumn{1}{l}{} &  & \multicolumn{2}{c}{\textbf{RMSE}}  & \multicolumn{2}{c}{\textbf{Input variable selection}}  &  \\ \cmidrule(lr){3-4} \cmidrule(lr){5-6} 
\multicolumn{1}{l}{} & \textbf{Method} & \textbf{Train} & \textbf{Test} & \textbf{FPR(\%)} & \textbf{FNR(\%)}  & $\boldsymbol{95\%}$ \textbf{Coverage (\%)}  &\textbf{Sparsity(\%)} \\ \midrule
\multirow{6}{*}{\rotatebox[origin=c]{90}{Dense}} & SVBNN &1.01 $\pm$ 0.02 &1.01 $\pm$ 0.00   &- &-  &97.5 $\pm$ 1.71   &6.45 $\pm$ 0.83 \\
 & VBNN &1.00 $\pm$ 0.02 &1.00 $\pm$ 0.00 &- &-  &91.4 $\pm$ 3.89     &100 $\pm$ 0.00 \\
 & VD &0.99 $\pm$ 0.02 &1.01 $\pm$ 0.00  &-  &-  & 76.4 $\pm$ 4.75   &28.6 $\pm$ 2.81\\
 & HS-BNN &0.98 $\pm$ 0.02 &1.02 $\pm$ 0.01  & - & -   & 83.5 $\pm$ 0.78  &64.9 $\pm$ 24.9\\
 & AGP &0.99 $\pm$ 0.02 & 1.01 $\pm$ 0.00  & - & -   & -  &30.0 $\pm$ 0.00\\
 & LOT &1.04 $\pm$ 0.01  &1.02 $\pm$ 0.00   & - & -   & -  &10.0 $\pm$ 0.00\\
\midrule
\multirow{6}{*}{\rotatebox[origin=c]{90}{Sparse}} & SVBNN &0.99 $\pm$ 0.03 &1.00 $\pm$ 0.01  &0.00 $\pm$ 0.00 &0.00 $\pm$ 0.00  &96.4 $\pm$ 4.73   &2.15 $\pm$ 0.25 \\
 & VBNN &0.92 $\pm$ 0.05  &1.53 $\pm$ 0.17 &100 $\pm$ 0.00  &0.00 $\pm$ 0.00   &90.7 $\pm$ 8.15  &100 $\pm$ 0.00\\
 & VD &0.86 $\pm$ 0.04  &1.07 $\pm$ 0.03 &72.9 $\pm$ 6.99 &0.00 $\pm$ 0.00   & 75.5 $\pm$ 7.81  &20.8 $\pm$ 3.08 \\
 &HS-BNN &0.90 $\pm$ 0.04 &1.29 $\pm$ 0.04 &-&-&67.0 $\pm$ 8.54 &32.1 $\pm$ 20.1\\
 & AGP &1.01 $\pm$ 0.03  &1.02 $\pm$ 0.00  &16.9 $\pm$ 1.81  &0.00 $\pm$ 0.00   & -   & 5.00 $\pm$ 0.00 \\
 & LOT &0.96 $\pm$ 0.01  &1.04 $\pm$ 0.01  &19.5 $\pm$ 2.57  &0.00 $\pm$ 0.00    & -  & 5.00 $\pm$ 0.00\\ 
 \bottomrule
\end{tabular}
\end{table}

First, we examine the impact of different choices of $\lambda$ on our VB sparse DNN modeling. %The experiments are performed on 10 randomly initialized sparse BNNs with 10 independent training sets generated from (1) and (2) respectively. 
A set of different $\lambda$ values are used, and for each $\lambda$, we compute the training square root MSE (RMSE) and testing RMSE based on $\widehat{f}_H$. Results for the simulation setting (B) are plotted in Figure \ref{fig:sparseteacher_lambda} along with error bars (Refer to Section B.4 in supplementary material for the plot under the simulation setting (A)). The figure shows that as $\lambda$ increases, the resultant network becomes denser and the training RMSE monotonically decreases, while testing RMSE curve is roughly U-shaped. In other words, an overly small $\lambda$ leads to over-sparsified DNNs with insufficient expressive power, and an overly large $\lambda$ leads to overfitting DNNs. The suggested $\lambda_{opt}$ successfully locates in the valley of U-shaped testing curve, which empirically justifies our theoretical choice of $\lambda_{opt}$.

%We observe the suggested threshold $1/6^3 \approx 5\times 10^{-3}$ is reasonable. Specifically, when $\lambda$ is smaller than $5\times 10^{-3}$, the neural network becomes highly sparse and a huge drop in prediction power occurs; When $\lambda$ is greater than $5\times 10^{-3}$, the prediction remains good until the neural network gets too dense and overfitting starts to happen. Interestingly, EB estimation falls in the region of $\lambda$ with best predictive power.EB estimation

We next compare the performance of our method (with $\lambda_{opt}$) to the benchmark methods, and present results in Table $\ref{tb:teacher}$.
For the dense teacher network (A), our method leads to the most sparse structure with comparable prediction error; For the sparse teacher network (B), our method not only achieves the best prediction accuracy, but also always selects the correct set of relevant input variables.
%, while all other methods suffer from overfitting. 
Besides, we also explore uncertainty quantification of our methods, by studying the coverage of 95\% Bayesian predictive intervals (refer to supplementary material for details). 
Table $\ref{tb:teacher}$ shows that our method obtains coverage rates slightly higher than the nominal levels while other (Bayesian) methods suffer from undercoverage problems.

\paragraph{Simulation II: Sparse nonlinear function}
Consider the following sparse function $f_0$:
\begin{equation} \label{eq:sparsefunc}
    f_0(x_1,\dots,x_{200}) = \frac{7x_2}{1 + x^2_1} + 5\sin(x_3x_4)+2x_5, \quad \epsilon \sim \mathcal{N}(0,1),
\end{equation}
all covariates are iid $\mathcal{N}(0, 1)$ and data set contains $n=3000$ observations. A ReLU network with $L=3$ and $N=7$ is used. Similar to the simulation I, we study the impact of $\lambda$, and results in Figure $\ref{fig:sparsefunc_lambda}$ justify that $\lambda_{opt}$ is a reasonable choice. %Again, the EB estimation is close to the best option of $\lambda$. 
Table \ref{tb:sparsereg} compares the performances of our method (under $\lambda_{opt}$) to the competitive methods. Our method exhibits the best prediction power with minimal connectivity, among all the methods. In addition, our method achieves smallest FPR and acceptable FNR for input variable selection. In comparison, other methods select huge number of false input variables. Figure \ref{fig:sparsefunc_lambda} (c) shows the selected network (edges with $\phi_i>0.5$) in one replication that correctly identifies the input variables.

\begin{figure*}[tb] 
    \centering   
    \begin{subfigure}[b]{.3\textwidth}
        \centering
        \includegraphics[width=1\linewidth]{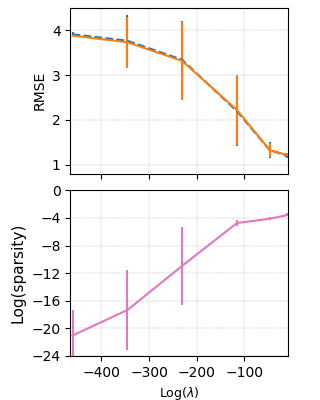}
        \caption{$\lambda \leq \lambda_{opt}$.}
    \end{subfigure}
        \begin{subfigure}[b]{.3\textwidth}
        \centering
        \includegraphics[width=1\linewidth]{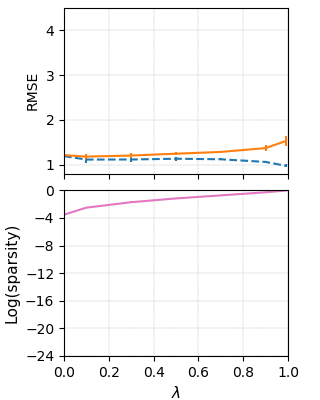}
        \caption{$\lambda \geq \lambda_{opt}$.}
    \end{subfigure}
    \begin{subfigure}[b]{.3\textwidth}
        \centering
        \includegraphics[width=1\linewidth]{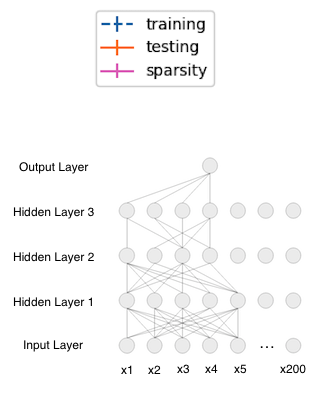}
        \caption{Selected network structure.}
    \end{subfigure}%
    \caption{(a) $\lambda = \{10^{-200}, 10^{-150}, 10^{-100}, 10^{-50}, 10^{-20}, 10^{-5}, \lambda_{opt}\}$. (b) $\lambda = \{\lambda_{opt}, 0.1, 0.3, 0.5,$ $ 0.7, 0.9, 0.99\}$.  (c) A selected network structure for (\ref{eq:sparsefunc}). %Please note that the $x$ axes of figures (a) and (b) are in different scale.
    }
    \label{fig:sparsefunc_lambda}
\end{figure*}

\begin{table}[htb]
%\captionsetup{font=small}
\caption{Results for Simulation II. The sparsity levels selected for AGP and LOT are both 30\%.}
\label{tb:sparsereg}
%\scriptsize
\centering
\begin{tabular}{lccccc}
\toprule
\textbf{Method} & \textbf{Train RMSE} & \textbf{Test RMSE} &\textbf{FPR(\%)} &\textbf{FNR(\%)} & \textbf{Sparsity(\%)}  \\ 
\midrule
SVBNN &1.19 $\pm$ 0.05 &1.21 $\pm$ 0.05  &0.00 $\pm$ 0.21 & 16.0 $\pm$ 8.14 &2.97 $\pm$ 0.48  \\
VBNN  &0.96 $\pm$ 0.06 &1.99 $\pm$ 0.49  &100 $\pm$ 0.00  &0.00 $\pm$ 0.00 &100  $\pm$ 0.00 \\
VD    &1.02 $\pm$ 0.05 &1.43 $\pm$ 0.19  &98.6 $\pm$ 1.22 &0.67 $\pm$ 3.65 &46.9 $\pm$ 4.72 \\
HS-BNN &1.17 $\pm$ 0.52&1.66 $\pm$ 0.43&-&-&41.1 $\pm$ 36.5\\
AGP   &1.06 $\pm$ 0.08 &1.58 $\pm$ 0.11  &82.7 $\pm$ 3.09  &1.33 $\pm$ 5.07 &30.0 $\pm$ 0.00   \\
LOT   &1.08 $\pm$ 0.09 &1.44 $\pm$ 0.14  &83.6 $\pm$ 2.94 &0.00 $\pm$ 0.00 &30.0 $\pm$ 0.00\\
\bottomrule
\end{tabular}
\end{table}

\begin{wrapfigure}{R}{0.5\textwidth}
    \centering
    \includegraphics[width=0.42\textwidth]{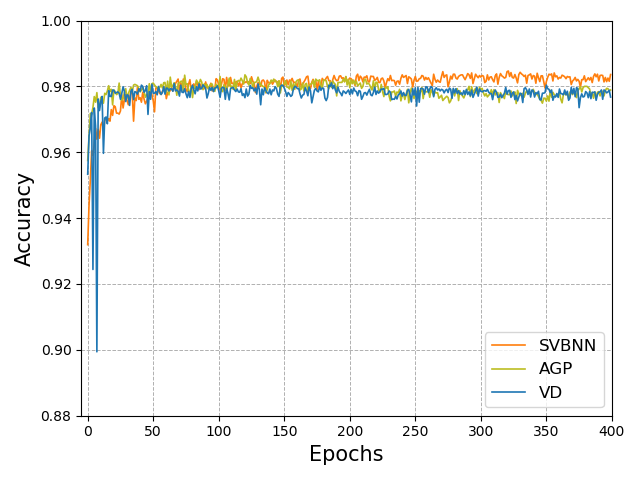}
    \caption{Testing accuracy for MNIST} 
    \label{fig:MNIST}
\end{wrapfigure} 
\paragraph{MNIST application.}

% \blue{I planned to mention here that the UCI regression results will be added in the appendix, since I had another work with Tianyang before which got a criticism from a reviewer saying that we didn't run uci datasets, which had become a standard for Bayesian models.}
We evaluate the performance of our method on MNIST data for classification tasks, by comparing with benchmark methods. A 2-hidden layer DNN with 512 neurons in each layer is used. We compare the testing accuracy of our method (with $\lambda_{opt}$) to the benchmark methods at different epochs using the same batch size (refer to supplementary material for details). Figure $\ref{fig:MNIST}$ shows our method achieves best accuracy as epoch increases, and the final sparsity level for SVBNN, AGP and VD are $5.06\%$, $5.00\%$ and $2.28\%$. 

In addition, an illustration of our method's capability for uncertainty quantification on MNIST can be found in the supplementary material, where additional experimental results on UCI regression datasets can also be found.

\section{Conclusion and discussion}
We proposed a variational inference method for deep neural networks under spike-and-slab priors with theoretical guarantees. Future direction could be investigating the theory behind choosing hyperparamters via the EB estimation instead of deterministic choices.

Furthermore, extending the current results to more complicated networks (convolutional neural network, residual network, etc.) is not trivial. Conceptually, it requires the design of structured sparsity (e.g., group sparsity in \cite{Neklyudov2017Structured}) to fulfill the goal of faster prediction. Theoretically, it requires deeper understanding of the expressive ability (i.e. approximation error) and capacity (i.e., packing or covering number) of the network model space. For illustration purpose, we include an example of Fashion-MNIST task using convolutional neural network in the supplementary material, and it demonstrates the usage of our method on more complex networks in practice.

\section*{Broader Impact}
We believe the ethical aspects are not applicable to this work.
For future societal consequences, deep learning has a wide range of applications such as computer version and natural language processing. Our work provides a solution to overcome the drawbacks of modern deep neural network, and also improves the understanding of deep learning.

The proposed method could improve the existing applications. Specifically, sparse learning helps apply deep neural networks to hardware limited devices, like cell phones or pads, which will broaden the horizon of deep learning application. In addition, as a Bayesian method, not only a result, but also the knowledge of confidence or certainty in that result are provided, which could benefit people in various aspects. For example, in the application of cancer diagnostic, by providing the certainty associated with each possible outcome, Bayesian learning would assist the medical professionals to make a better judgement about whether the tumor is a cancer or a benign one. Such kind of ability to quantify uncertainty would contribute to the modern deep learning.

\begin{ack}
We would like to thank Wei Deng for the helpful discussion and thank the reviewers for their thoughtful comments. Qifan Song's research is partially supported by National Science Foundation grant DMS-1811812. Guang Cheng was a member of the Institute for Advanced Study in writing this paper, and he would like to thank the institute for its hospitality.

\end{ack}

\bibliography{ref}

\newpage
\begin{center}
	\textbf{\LARGE Supplementary Document to the Paper "Efficient Variational Inference for Sparse Deep Learning with Theoretical Guarantee"}
\end{center}
\appendix
\vspace{30px}

In this document, the detailed proofs for the theoretical results are provided in the first section, along with additional numerical results presented in the second section.

\section{Proofs of theoretical results}

\subsection{Proof of Lemma 4.1}
As a technical tool for the proof, we first restate the Lemma 6.1 in \cite{cherief2018consistency} as follows.
\begin{lem} \label{lem:mixture}
	For any $K>0$, the KL divergence between any two mixture densities $\sum_{k=1}^Kw_kg_k$ and $\sum_{k=1}^K\tilde{w}_k\tilde{g}_k$ is bounded as
	\[
	\mbox{KL}(\sum_{k=1}^Kw_kg_k || \sum_{k=1}^K\tilde{w}_k\tilde{g}_k) \leq \mbox{KL}(\boldsymbol{w}||\tilde{\boldsymbol{w}}) + \sum^K_{k=1}w_k\mbox{KL}(g_k||\tilde{g}_k),
	\]
	where $\mbox{KL}(\boldsymbol{w}||\tilde{\boldsymbol{w}}) = \sum_{k=1}^Kw_k\log \frac{w_k}{\Tilde{w}_k}$.
\end{lem}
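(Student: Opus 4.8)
The plan is to prove the inequality by a pointwise application of the log-sum inequality followed by integration. Write $p=\sum_{k=1}^K w_k g_k$ and $q=\sum_{k=1}^K \tilde{w}_k\tilde{g}_k$ for the two mixture densities with respect to a common dominating measure, so the goal becomes to bound $\mbox{KL}(p\|q)=\int p\log(p/q)$. Before anything else I would dispatch the trivial case: if the right-hand side is infinite there is nothing to prove, so assume both $\mbox{KL}(\boldsymbol{w}\|\tilde{\boldsymbol{w}})$ and each $\mbox{KL}(g_k\|\tilde{g}_k)$ are finite.

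The central tool is the log-sum inequality: for nonnegative reals $a_1,\dots,a_K$ and $b_1,\dots,b_K$,
\[
\Bigl(\sum_{k=1}^K a_k\Bigr)\log\frac{\sum_{k=1}^K a_k}{\sum_{k=1}^K b_k}\;\leq\;\sum_{k=1}^K a_k\log\frac{a_k}{b_k},
\]
which is an immediate consequence of the convexity of $t\mapsto t\log t$ (equivalently, Jensen's inequality). I would apply this at each point $x$ with the choices $a_k=w_k g_k(x)$ and $b_k=\tilde{w}_k\tilde{g}_k(x)$, so that $\sum_k a_k=p(x)$ and $\sum_k b_k=q(x)$. This yields the pointwise bound
\[
p(x)\log\frac{p(x)}{q(x)}\;\leq\;\sum_{k=1}^K w_k g_k(x)\log\frac{w_k g_k(x)}{\tilde{w}_k\tilde{g}_k(x)}.
\]

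The remaining step is to split the logarithm on the right as $\log\frac{w_k}{\tilde{w}_k}+\log\frac{g_k(x)}{\tilde{g}_k(x)}$ and integrate both sides over $x$. The left side integrates to exactly $\mbox{KL}(p\|q)$. On the right, the first group of terms becomes $\sum_k w_k\log\frac{w_k}{\tilde{w}_k}\int g_k(x)\,dx=\sum_k w_k\log\frac{w_k}{\tilde{w}_k}=\mbox{KL}(\boldsymbol{w}\|\tilde{\boldsymbol{w}})$, using that each component density $g_k$ integrates to one; the second group becomes $\sum_k w_k\int g_k(x)\log\frac{g_k(x)}{\tilde{g}_k(x)}\,dx=\sum_k w_k\,\mbox{KL}(g_k\|\tilde{g}_k)$. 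Combining these two identities reproduces the claimed bound.

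I expect the only genuine obstacle to be the measure-theoretic bookkeeping: justifying that the integrals of the individual split terms are each well-defined and that summation commutes with integration. This is routine under the finiteness assumption made at the outset, since the left-hand integrand is nonnegative by Gibbs' inequality and each right-hand summand is integrable by hypothesis, so Tonelli/Fubini applies to the nonnegative decomposition. The direction of the log-sum inequality (the engine of the whole argument) is the one conceptual point, and I would state it as a standalone consequence of convexity rather than re-deriving it in line.
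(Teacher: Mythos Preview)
Your proposal is correct in substance and follows the standard route via the log-sum inequality; note, however, that the paper does not actually prove this lemma at all --- it simply restates Lemma~6.1 of \cite{cherief2018consistency} and uses it as a black box. So there is no ``paper's own proof'' to compare against, and your argument would in fact supply what the paper omits.

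One small slip worth correcting: you write that ``the left-hand integrand is nonnegative by Gibbs' inequality,'' but Gibbs' inequality only guarantees that the \emph{integral} $\int p\log(p/q)$ is nonnegative, not the pointwise integrand. This does not damage your argument, because what you actually need is that the right-hand side is integrable (which follows from your finiteness assumption on each $\mbox{KL}(g_k\|\tilde g_k)$ and $\mbox{KL}(\boldsymbol{w}\|\tilde{\boldsymbol{w}})$), and then the pointwise inequality can be integrated directly by monotonicity; since the sum over $k$ is finite, ordinary linearity of the integral suffices and no Tonelli/Fubini is required. With that adjustment your proof is clean.
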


{\noindent\bf Proof of Lemma 4.1}
% \begin{lem}{\label{thmbound2}} 
% Under Condition 4.1-4.4, with dominating probability, 
% \begin{equation}{\label{eqbound2}}
% \begin{split}
% \inf_{q_{\theta} \in \mathcal Q}\Bigl\{ \mbox{KL}(q(\theta)||\pi(\theta|\lambda))
% + \int l_n(P_0, P_{\theta})q(\theta)(d\theta) \Bigr\} \leq Cn (r^*_n
% + \xi^*_n )
% \end{split}
% \end{equation}
% % holds, where  $r^*_n:=r_n(L, N, s^*)=\frac{Ls^*}{n}\log(BpN)+\frac{2s^*}{n}\log(BLpN)+\frac{s^*}{n}\log\left(7pL\max(\frac{n}{s^*},1)\right)$, $\xi^*_n := \xi_n(L, N, s^*)= \inf_{\theta \in \Theta(L, \boldsymbol{p}, s^*)}||f_{\theta}-f_0||^2_\infty$ and $M_n$ is any diverging sequence.
% % holds, where  $r^*_n:=r_n(L, N, s^*)=\frac{(L+1)s^*}{n}\log(BpN) + \frac{s^*}{n}\log(\sqrt{\frac{n}{N^2s^*}}\{\frac{p_0 + 2}{p}(L+1)+1\})$, $\xi^*_n := \xi_n(L, N, s^*)= \inf_{\theta \in \Theta(L, \boldsymbol{p}, s^*)}||f_{\theta}-f_0||^2_\infty$ and $M_n$ is any diverging sequence.
% holds, where  $r^*_n:=r_n(L, N, s^*)=\frac{(L+1)s^*}{n}\log N+\frac{s^*}{n}\log\Bigl(p\sqrt{n/s^*}\Bigr)$, $\xi^*_n := \xi_n(L, N, s^*)= \inf_{\theta \in \Theta(L, \boldsymbol{p}, s^*, B)}||f_{\theta}-f_0||^2_\infty$, and $C$ is either some positive constant if $\lim n(r_n^*+\xi^*_n)=\infty$, or any diverging sequence if $\lim\sup n(r_n^*+\xi^*_n)\neq \infty$.
% \end{lem}

\begin{proof}
	It suffices to construct some $q^*(\theta) \in \mathcal Q$, such that w.h.p,
	\[
	\begin{split}
	&\mbox{KL}(q^*(\theta)||\pi(\theta|\lambda)) + \int_{\Theta} l_n(P_0, P_{\theta})q^*(\theta)(d\theta)\\
	\leq &C_1nr^*_n+C'_1n\inf_{\theta}||f_{\theta}-f_0||^2_{\infty}+C'_1nr^*_n,
	\end{split}
	\]
	where $C_1$, $C'_1$ are some positive constants if $\lim n(r_n^*+\xi^*_n)=\infty$, or any diverging sequences if $\lim\sup n(r_n^*+\xi^*_n)\neq \infty$.
	
	Recall $\theta^*=\arg\min_{\theta \in \Theta(L,\boldsymbol{p},s^*,B)} ||f_{\theta}-f_0||^2_{\infty}$, then $q^*(\theta) \in \mathcal Q$ can be constructed as
	\begin{align}
	&\mbox{KL}(q^*(\theta)||\pi(\theta|\lambda)) \leq C_1nr^*_n \label{eqche2},\\
	&\int_{\Theta} ||f_{\theta} - f_{\theta^*}||_{\infty}^2q^*(\theta)(d\theta) \leq r^*_n. \label{eqche1}
	\end{align}
	
	We define $q^*(\theta)$ as follows, for $i=1,\dots, T$:
	\begin{equation}\label{vbstar}
	% \begin{split}
	%   & \theta_i|\gamma^*_i \sim \gamma^*_i\mathcal{N}(\theta_i^*,\sigma^2_n)+(1-\gamma^*_i)\delta_{0},\\
	%   & \gamma_i^* = 1(\widetilde{\gamma}^*_i > 0.5), \\
	%   & \widetilde{\gamma}^*_i = (1+\exp(\frac{-\eta^*_i}{\tau}))^{-1},\\
	%   & \eta^*_i = \log \frac{\lambda^*_i}{1-\lambda^*_i} + \log \frac{u_i}{1-u_i}, \\
	%   & \lambda^*_i = 1(\theta^*_i\neq 0) \mbox{ and } u_i \sim \mathcal{U}(0,1).
	% \end{split}
	\begin{split}
	& \theta_i|\gamma^*_i \sim \gamma^*_i\mathcal{N}(\theta_i^*,\sigma^2_n)+(1-\gamma^*_i)\delta_{0},\\
	& \gamma_i^* \sim \mbox{Bern}(\phi^*_i), \\
	& \phi^*_i = 1(\theta^*_i\neq 0),
	\end{split}
	\end{equation}
	% where $a_n^2 =\frac{s^*}{4n}(BpN)^{-2L}\left\{(p+1+\frac{1}{BpN-1})^2\frac{L^2}{(BpN)^2}+\frac{1}{(BpN)^2-1} +\frac{2}{(BpN-1)^2}\right\}^{-1} $. It is worth noting that $\gamma^*_i$ reduces to a binary inclusion variable under this construction.
	% where $a_n^2 =\frac{s^*}{3n}(BpN)^{-2(L+1)}\left\{(p_0+1+\frac{1}{BpN-1})^2\frac{L^2}{(BpN)^2}+\frac{p^2}{(BpN)^2}+\frac{1}{(BpN)^2-1} +\frac{2}{(BpN-1)^2}\right\}^{-1}$. 
	where $a_n^2=\frac{s^*}{8n}\log^{-1}(3pN)(2BN)^{-2(L+1)}\Bigl\{(p + 1 + \frac{1}{BN-1})^2 + \frac{1}{(2BN)^2-1}
	+\frac{2}{(2BN-1)^2}\Bigr\}^{-1}$.

	% We prove (\ref{eqche2}) first, note that
	% \[
	%  q^*_{\theta} = \prod^{H}_{i=1}\phi^*_i\mathcal{U}([a_i, b_i]) + (1 - \phi^*_i)\delta_{0}
	% \]
	% and 
	% \[
	% \pi_{\theta} = (\phi_0 \mathcal{U}([-B, B]) + (1 - \phi_0)\delta_{0})^{H}
	% \]
	% where $\phi_0 = {H \choose s}^{-1}$, $a_i=\theta^*_i - s_n$ and $b_i = \theta^*_i + s_n$, hence
	% \[
	% \begin{split}
	%   \mbox{KL}(q^{\ast}_{\theta}||\pi_{\theta}) &= \sum^{H}_{i=1} \mbox{KL}(\phi^*_i\mathcal{U}([a_i, b_i]) + (1 - \phi^*_i)\delta_{0} || \phi_0 \mathcal{U}([-B, B]) + (1 - \phi_0)\delta_{0}) \\
	%   &\leq \sum^H_{i=1} \phi^*_i\log(\frac{\phi^*}{\phi_0}) + (1-\phi^*_i)\log(\frac{1-\phi^*}{1-\phi_0}) + \sum^H_{i=1}\phi^*_i \mbox{KL}(\mathcal{U}([a_i, b_i]||\mathcal{U}([-B, B]) + (1-\phi^*_i)\mbox{KL}(\delta_0 ||\delta_0) 
	% \end{split}
	% \]
	% where the first inequality is due to Lemma 6.1 in \cite{cherief2018consistency}.
	
	To prove (\ref{eqche2}), denote $\Gamma^T$ as the set of all possible binary inclusion vectors with length $T$, then $q^*(\theta)$ and $\pi(\theta|\lambda)$ could be written as mixtures
	\[
	q^*(\theta) = \sum_{\gamma \in \Gamma^T}1(\gamma = \gamma^*)\prod^{T}_{i=1}\gamma_i\mathcal{N}(\theta^*_i, \sigma^2_n) + (1 - \gamma_i)\delta_{0},
	\]
	and 
	\[
	\pi(\theta|\lambda) =\sum_{\gamma \in \Gamma^T} \pi(\gamma)\prod^T_{i=1}\gamma_i \mathcal{N}(0, \sigma^2_0) + (1 - \gamma_i)\delta_{0},
	\]
	where $\pi(\gamma)$ is the probability for vector $\gamma$ under prior distribution $\pi$. Then,
	\[
	\begin{split}
	&\mbox{KL}(q^*(\theta)||\pi(\theta|\lambda))\\
	\leq &\log\frac{1}{\pi(\gamma^*)} + \sum_{\gamma \in \Gamma^T}1(\gamma = \gamma^*)\mbox{KL}\Bigl\{\prod^{T}_{i=1}\gamma_i\mathcal{N}(\theta^*_i, \sigma^2_n) + (1 - \gamma_i)\delta_{0}\Bigl|\Bigr| \prod^T_{i=1}\gamma_i \mathcal{N}(0, \sigma^2_0)) + (1 - \gamma_i)\delta_{0}\Bigr\}\\
	= &\log \frac{1}{\lambda^{s^*} (1-\lambda)^{T-s^*}} + \sum^T_{i=1}\mbox{KL}\Bigl\{\gamma^*_i\mathcal{N}(\theta^*_i, \sigma^2_n) + (1 - \gamma^*_i)\delta_{0}|| \gamma^*_i \mathcal{N}(0, \sigma^2_0)) + (1 - \gamma^*_i)\delta_{0}\Bigr\}\\
	= &s^*\log(\frac{1}{\lambda})+(T-s^*)\log(\frac{1}{1-\lambda})+ \sum^T_{i=1}\gamma^*_i\Bigl\{\frac{1}{2}\log\Bigl(\frac{\sigma^2_0}{\sigma^2_n}\Bigr)+\frac{\sigma^2_n+\theta^{*2}_i}{2} - \frac{1}{2}\Bigr\}\\
	\leq &C_0nr^*_n  + \frac{s^*}{2}\sigma^2_n+\frac{s^*}{2}(B^2-1) +\frac{s^*}{2}\log\Bigl(\frac{\sigma^2_0}{\sigma^2_n}\Bigr)\\
	\leq &(C_0+1)nr^*_n+\frac{s^*}{2}B^2+\frac{s^*}{2}\log\Bigl(\frac{8n}{s^*}\log(3pN)(2BN)^{2L+2}\Bigl\{(p+1+\frac{1}{BN-1})^2\\
	&+\frac{1}{(2BN)^2-1}+\frac{2}{(2BN-1)^2}\Bigr\}\Bigr)\\
	\leq &(C_0+2)nr^*_n + \frac{B^2}{2}s^* + (L+1)s^*\log(2BN) + \frac{s^*}{2}\log\log(3BN)+\frac{s^*}{2}\log\Bigl(\frac{n}{s^*}p^2\Bigr)\\
	\leq &(C_0+3)nr^*_n + (L+1)s^*\log N+s^*\log\Bigl(p\sqrt{\frac{n}{s^*}}\Bigr)\\
	\leq & C_1nr^*_n ,  \mbox{ for sufficiently large n},
	\end{split}
	\]
	where $C_0$ and $C_1$ are some fixed constants. The first inequality is due to Lemma \ref{lem:mixture} and the second inequality is due to Condition $4.4$.
	Furthermore, by Appendix G of \cite{Cherief2020convergence}, it can be shown

	\[
	\begin{split}
	&\int_{\Theta} ||f_{\theta}-f_{\theta^*}||^2_{\infty}q^*(\theta)(d\theta) \\
	\leq &8a^2_n\log(3BN)(2BN)^{2L+2}\Bigl\{(p + 1 + \frac{1}{BN-1})^2 + \frac{1}{(2BN)^2-1}
	+\frac{2}{(2BN-1)^2}\Bigr\}\\
	\leq &\frac{s^*}{n} \leq r^*_n.
	\end{split}
	\]
	
	% Noting that $||f_{\theta}-f_0||^2_{\infty} \leq 2||f_{\theta}-f_{\theta^*}||^2_{\infty}+2||f_{\theta^*}-f_0||^2_{\infty}$, then by Fubini's theorem,
	% \[
	% \begin{split}\label{eqlr1}
	% \mathbb{E}_{P_0}\int l_n(P_0, P_{\theta})q^*(\theta)(d\theta)&=n\int\frac{1}{2\sigma_{\varepsilon}^2}||f_0-f_{\theta}||^2_{\infty} q^*(\theta)(d\theta)\\
	% &\leq \frac{n}{\sigma_{\varepsilon}^2}||f_{\theta^*}-f_0||^2_{\infty} + \frac{nr^*_n}{\sigma_{\varepsilon}^2},
	% \end{split}
	% \]
	% which thereafter by Markov inequality, implies that
	% $\int l_n(P_0, P_{\theta})q^*(\theta)(d\theta)\leq M_nn (||f_{\theta^*}-f_0||^2_2+r^*_n)/\sigma_{\varepsilon}^2$ w.h.p. for any $M_n\rightarrow\infty$, which concludes this lemma together with (\ref{eqche2}).
	
	Noting that 
	\[
	\begin{split}
	l_n(P_0, P_{\theta}) &= \frac{1}{2\sigma^2_{\epsilon}}(||Y - f_{\theta}(X)||^2_2 - ||Y - f_{0}(X)||^2_2)\\
	&= \frac{1}{2\sigma^2_{\epsilon}} (||Y - f_{0}(X) + f_0(X)-f_{\theta}(X))||^2_2 - ||Y - f_{0}(X)||^2_2)\\
	&= \frac{1}{2\sigma^2_{\epsilon}}(||f_{\theta}(X)-f_0(X)||^2_2 + 2\langle Y-f_0(X), f_0(X) - f_\theta(X)\rangle),
	\end{split}
	\]
	
	Denote
	\[
	\begin{split}
	\mathcal{R}_1 &= \int_{\Theta} ||f_{\theta}(X) - f_0(X)||^2_2 q^*(\theta)(d\theta), \\
	\mathcal{R}_2 &=\int_{\Theta} \langle Y-f_0(X), f_0(X) - f_\theta(X)\rangle q^*(\theta)(d\theta).
	\end{split}
	\]

	Since $||f_{\theta}(X) - f_0(X)||^2_2 \leq n ||f_{\theta} - f_0||^2_{\infty}\leq n||f_{\theta}-f_{\theta^{\ast}}||^2_{
		\infty}+n||f_{\theta^{\ast}}-f_0||^2_{\infty}$,
	\[
	\mathcal{R}_1 \leq  nr^*_n + n||f_{\theta^{\ast}}-f_0||^2_{\infty}. 
	\]
	
	Noting that $Y - f_0(X) = \epsilon \sim \mathcal{N}(0, \sigma^2_{\epsilon}I)$, then
	\[
	\begin{split}
	\mathcal{R}_2 &= \int_{\Theta} \epsilon^T(f_0(X) - f_\theta(X))q^*(\theta)(d\theta)= \epsilon^T\int_{\Theta} (f_0(X) - f_\theta(X))q^*(\theta)(d\theta) \sim \mathcal{N}(0, c_f\sigma^2_{\epsilon}),
	\end{split}
	\]
	where $c_f = ||\int_{\Theta} (f_0(X) - f_\theta(X))q^*(\theta)(d\theta)||^2_2 \leq \mathcal{R}_1$ due to Cauchy-Schwarz inequality. Therefore, 
	$\mathcal{R}_2=O_p(\sqrt{\mathcal{R}_1})$, and w.h.p., $\mathcal{R}_2\leq C'_0\mathcal{R}_1$, where $C'_0$ is some positive constant if $\lim n(r_n^*+\xi_n^*)=\infty$ or $C'_0$ is any diverging sequence if $\lim\sup  n(r_n^*+\xi_n^*)\neq \infty$.
	%Then by Gaussian tail bound
	%\[
	%P_0(\mathcal{R}_2 \geq \mathcal{R}_1) \leq \exp(-\frac{\mathcal{R}^2_1}{2\sigma^2_{\epsilon}\mathcal{R}_1}),
	%\]
	%which implies $\mathcal{R}_2 \leq \mathcal{R}_1$ w.h.p..
	Therefore,
	\[
	\begin{split}
	\int_{\Theta} l_n(P_0, P_{\theta})q^*(\theta)(d\theta)= \mathcal{R}_1/2\sigma^2_{\epsilon} + \mathcal{R}_2/\sigma^2_{\epsilon} \leq &(2C'_0+1)n (r^*_n + ||f_{\theta^{\ast}}-f_0||^2_{\infty})/2\sigma_{\varepsilon}^2\\
	\leq & C'_1(nr^*_n + ||f_{\theta^{\ast}}-f_0||^2_{\infty}))\mbox{,  w.h.p.}, 
	\end{split}
	\]
	which concludes this lemma together with (\ref{eqche2}).
\end{proof}

\subsection{Proof of Lemma 4.2}

Under Condition 4.1 - 4.2, we have the following lemma that shows the existence of testing functions over $\Theta_n=\Theta(L, \boldsymbol{p}, s_n)$, where $\Theta(L, \boldsymbol{p}, s_n)$ denotes the set of parameter whose $L_0$ norm is bounded by $s_n$.
\begin{lem}{\label{lmtesting}}
	% Let $\varepsilon_n=Mn^{-\alpha/(2\alpha+p)}\log^{\delta}(n)$ for any $\delta\geq1$ and some large constant $M$. Then there exists some testing function $\phi\in[0,1]$ and $C_1>0$, $C_2>1/3$, such that
	Let $\varepsilon^*_n=Mn^{-1/2}\sqrt{(L+1)s^*\log N + s^*\log(p\sqrt{n/s^*})}\log^\delta(n)$ for any $\delta>1$ and some large constant M. 
	%Let $s_n G_n =c_s\{(L+1)s^*\log(pN) + s^*\log(\sqrt{n/s^*}(\sqrt{3}\frac{p_0}{p}+1)N^{-1})\}\log^{2\delta}(n) \asymp n\varepsilon_n^{*2}$ for some constant $c_s$ with $G_n \succ (L+1) \log n$. 
	Let $s_n = s^*\log^{2\delta-1} n$.
	Then there exists some testing function $\phi\in[0,1]$ and $C_1>0$, $C_2>1/3$, such that
	\[
	\begin{split}
	\mathbb{E}_{P_0}(\phi)&\leq  \exp\{-C_1n\varepsilon_n^{*2}\},\\
	\sup_{\substack{P_{\theta} \in \mathcal{F}(L,\boldsymbol{p},s_n)\\ d(P_{\theta}, P_0)>\varepsilon^*_n}}\mathbb{E}_{P_{\theta}}(1-\phi)&\leq\exp \{-C_2nd^2(P_0,P_{\theta})\}.
	\end{split}
	\]
	%for all $P_\theta\in \mathcal{F}(L^*,\boldsymbol{p}^*,s^*)\}$ and $d(P_\theta,P_0)\geq \varepsilon_n$.
\end{lem}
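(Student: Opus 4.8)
The plan is to follow the classical Ghosal--Ghosh--van der Vaart testing construction, specialized to the sparse DNN sieve $\mathcal{F}(L,\boldsymbol{p},s_n)$. First I would build \emph{local} tests: for each fixed alternative $P_{\theta_1}$ in the sieve with $d(P_{\theta_1},P_0)>\varepsilon_n^*$, the Gaussian regression structure makes the Hellinger distance a monotone function of $\|f_{\theta_1}-f_0\|$, so a Neyman--Pearson / Birg\'e-type test $\phi_{\theta_1}$ can be constructed whose Type I and Type II errors are both bounded by $\exp\{-c\,n\,d^2(P_{\theta_1},P_0)\}$ for some universal $c$. This step is essentially off the shelf and relies only on the exponential concentration of the log-likelihood ratio around its mean.

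The second and central step is a metric-entropy bound for the sieve. I would cover $\mathcal{F}(L,\boldsymbol{p},s_n)$ by $\|\cdot\|_\infty$-balls, which dominate the Hellinger metric $d$, thereby reducing a function-space covering to a weight-space covering via Condition 4.2: the 1-Lipschitz activation together with the bounded-weight constraint $\|\theta\|_\infty\le B$ guarantees that a perturbation of the active weights of size $\eta$ changes $f_\theta$ in sup-norm by at most a factor growing like $(2BN)^{L+1}$ times $\eta$ (exactly the Lipschitz factor already surfacing in the proof of Lemma 4.1). Choosing a support (there are $\binom{T}{s_n}$ choices) and then an $\eta$-net over the $s_n$ active coordinates yields
\[
\log N(\varepsilon_n^*,\mathcal{F}(L,\boldsymbol{p},s_n),d)\;\lesssim\;\log\binom{T}{s_n}+s_n\bigl\{(L+1)\log(2BN)+\log(1/\varepsilon_n^*)\bigr\}\;\lesssim\;s_n\bigl\{(L+1)\log N+\log(p\sqrt{n/s^*})\bigr\}.
\]
The crucial verification is that the right-hand side is at most a small multiple of $n\varepsilon_n^{*2}$: since $s_n=s^*\log^{2\delta-1}n$ while $n\varepsilon_n^{*2}=M^2\{(L+1)s^*\log N+s^*\log(p\sqrt{n/s^*})\}\log^{2\delta}n$, the extra single factor of $\log n$ in the latter absorbs the entropy, so the inflation of the sieve by $\log^{2\delta-1}n$ and of the test radius by $\log^\delta n$ are precisely what make the bound close.

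Finally I would assemble the global test as $\phi=\max_j\phi_{\theta_j}$ over the centers $\theta_j$ of the covering. A union bound gives $\mathbb{E}_{P_0}(\phi)\le N(\varepsilon_n^*,\cdot)\exp\{-c\,n\varepsilon_n^{*2}\}\le\exp\{-C_1 n\varepsilon_n^{*2}\}$, using $\log N\lesssim n\varepsilon_n^{*2}$ with room to spare; since any $P_\theta$ in the alternative lies within $\varepsilon_n^*$ of some center, the corresponding local test controls $\mathbb{E}_{P_\theta}(1-\phi)\le\exp\{-C_2 n d^2(P_0,P_\theta)\}$, and tracking the constant through the peeling over Hellinger shells $\{d(P_\theta,P_0)\in(j\varepsilon_n^*,(j+1)\varepsilon_n^*]\}$ yields a $C_2>1/3$, which is what the downstream contraction argument of Lemma 4.2 requires. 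The main obstacle is the entropy step: obtaining a sharp sup-norm covering bound for sparse bounded-weight networks and then matching it against $n\varepsilon_n^{*2}$ through the logarithmic factors---this is exactly where the precise definitions of $s_n$ and $\varepsilon_n^*$ are used, and where the argument could break if the network-to-output Lipschitz constant or the support-selection term $\log\binom{T}{s_n}$ were handled loosely.
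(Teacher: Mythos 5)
Your proposal follows essentially the same route as the paper's proof: the paper likewise builds Le Cam/Birg\'e-type local tests that are robust over Hellinger neighborhoods, takes $\phi=\max_k\psi_k$ over an $\varepsilon_n^*/19$-covering of $\mathcal{F}(L,\boldsymbol{p},s_n)$, and verifies $\log K\le n\varepsilon_n^{*2}/4$ by passing from the Hellinger metric to the sup-norm metric and exploiting the extra $\log^{2\delta}n$ factor in $n\varepsilon_n^{*2}$ relative to $s_n$. The only cosmetic differences are that the paper invokes Lemma 10 of Schmidt-Hieber (2017) for the sup-norm entropy of sparse bounded-weight networks where you re-derive it via support selection plus Lipschitz propagation, and it obtains $C_2=18^2/19^2/2>1/3$ directly from the covering-radius geometry rather than by shell peeling; neither affects the argument.
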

\begin{proof}
	%For the ease of notation, let's rewrite $\mathbb{E}_{P_0}(\xi)$ and $\mathbb{E}_{P_{\theta}}(\xi)$ as $P_0(\xi)$ and $P_{\theta}(\xi)$ for any integrable function $\xi$ respectively.
	Due to the well-known result (e.g., \cite{le2012asymptotic}, page 491 or \cite{ghosal2007convergence}, Lemma 2), there always exists a function $\psi\in[0,1]$, such that
	\[
	\begin{split}
	&\mathbb{E}_{P_0}(\psi)\leq \exp\{-nd^2(P_{\theta_1},P_{0}) /2\},\\
	&\mathbb{E}_{P_{\theta}}(1-\psi)\leq \exp\{-nd^2(P_{\theta_1},P_{0}) /2\},
	\end{split}
	\]
	for all $P_\theta\in\mathcal{F}(L,\boldsymbol{p},s_n)$ satisfying that $d(P_{\theta},P_{\theta_1})\leq d(P_{0},P_{\theta_1})/18$. 
	
	Let $K=N(\varepsilon_n^*/19,\mathcal{F}(L,\boldsymbol{p},s_n),d(\cdot,\cdot) )$ denote the covering number of set $\mathcal{F}(L,\boldsymbol{p},s_n)$, i.e., there exists $K$ Hellinger-balls with radius $\varepsilon_n^*/19$, that completely cover $\mathcal{F}(L,\boldsymbol{p},s_n)$.
	For any $\theta\in\mathcal{F}(L,\boldsymbol{p},s_n)$ (W.O.L.G, we assume $P_\theta$ belongs to the $k$th Hellinger ball centered at $P_{\theta_k}$), if $d(P_\theta, P_0)>\varepsilon_n^*$,
	then  we must have that  $d(P_0,P_{\theta_k})>(18/19)\varepsilon_n^*$ and there exists a testing function $\psi_k$, such that
	\[
	\begin{split}
	\mathbb{E}_{P_0}(\psi_k)&\leq \exp\{-nd^2(P_{\theta_k},P_{0}) /2\}\\
	&\leq \exp\{-(18^2/19^2/2)n\varepsilon_n^{*2}\},\\
	\mathbb{E}_{P_{\theta}}(1-\psi_k)&\leq \exp\{-nd^2(P_{\theta_k},P_{0}) /2\}\\
	&\leq \exp\{-n(d(P_0,P_{\theta})-\varepsilon_n^*/19)^2  /2\}\\
	&\leq \exp\{-(18^2/19^2/2)nd^2(P_0,P_{\theta})\}.
	\end{split}
	\]
	Now we define $\phi=\max_{k=1,\dots,K}\psi$.
	Thus we must have
	\[
	\begin{split}
	\mathbb{E}_{P_0}(\phi)&\leq\sum_k \mathbb{E}_{P_0}(\psi_k)\leq K \exp\{-(18^2/19^2/2)n\varepsilon_n^{*2}\}\\
	&\leq \exp\{-(18^2/19^2/2)n\varepsilon_n^{*2}-\log K\}.
	\end{split}
	\]
	Note that
	% \begin{align}
	% &\log K=\log N(\varepsilon_n/19,\mathcal{F}(L^*,\boldsymbol{p}^*,s^*),d(\cdot,\cdot) )\nonumber\\
	% &\leq \log N(\sqrt{8}\sigma_{\varepsilon}\varepsilon_n/19,\mathcal{F}(L^*,\boldsymbol{p}^*,s^*),\|\cdot\|_\infty )\nonumber\\
	% &\leq (s^*+1)\log(\frac{38}{\sqrt{8}\sigma_{\varepsilon}\varepsilon_n}(L^*+1)(N^*+1)^{2(L^*+2)})\nonumber\\
	% &\leq n\varepsilon_n^2/4,\label{testrate}
	% \quad \mbox{ for sufficiently large n} ,
	% \end{align}
	\begin{align}
	&\log K=\log N(\varepsilon_n^*/19,\mathcal{F}(L,\boldsymbol{p},s_n),d(\cdot,\cdot) )\nonumber\\
	&\leq \log N(\sqrt{8}\sigma_{\varepsilon}\varepsilon_n^*/19,\mathcal{F}(L,\boldsymbol{p},s_n),\|\cdot\|_\infty )\nonumber\\
	&\leq (s_n+1)\log(\frac{38}{\sqrt{8}\sigma_{\varepsilon}\varepsilon_n^*}(L+1)(N+1)^{2(L+1)})\nonumber\\
	&\leq C_0(s_n\log \frac{1}{\varepsilon_n^*} + s_n\log (L+1) + s_n(L+1)\log N)\nonumber\\
	& \leq s_n(L+1)\log n\log N\leq s^*(L+1)\log N\log^{2\delta} n \nonumber\\
	&\leq n\varepsilon_n^{*2}/4,\mbox{ for sufficiently large n,}\label{testrate}
	%\quad \mbox{ for sufficiently large n} ,
	\end{align}
	where $C_0$ is some positive constant, the first inequality is due to the fact
	\begin{equation*}\begin{split}
	d^2(P_{\theta}, P_0) 
	\leq 1- \exp\{-\frac{1}{8\sigma^2_{\epsilon}}||f_0 - f_{\theta}||^2_\infty\}
	\end{split}
	\end{equation*}
	and $\varepsilon_n^*=o(1)$,
	the second inequality is due to Lemma 10 of \cite{Schmidt-Hieber2017Nonparametric}\footnote{Although \cite{Schmidt-Hieber2017Nonparametric} only focuses on ReLU network, its Lemma 10 could apply to any 1-Lipchitz continuous activation function.}, and the last inequality is due to $s_n\log(1/\varepsilon^*_n) \asymp s_n\log n$. %and $\log (pN) = O(\log n)$.
	% and the fourth inequality is due to the choices of $s^*$, $\boldsymbol{p}^*$ and $L^*$, and the fact that $M$ is sufficiently large.
	Therefore,
	\[
	\begin{split}
	\mathbb{E}_{P_0}(\phi)&\leq\sum_k P_0(\psi_k)\leq  \exp\{-C_1n\varepsilon_n^{*2}\},
	\end{split}
	\]
	for some $C_1=18^2/19^2/2-1/4$. On the other hand, for any $\theta$, such that $d(P_\theta,P_0)\geq \varepsilon_n^*$, say $P_\theta$ belongs to the $k$th Hellinger ball, then we have
	\[
	\begin{split}
	\mathbb{E}_{P_{\theta}}(1-\phi)&\leq \mathbb{E}_{P_{\theta}}(1-\psi_k)\leq\exp \{-C_2nd^2(P_0,P_{\theta})\},
	\end{split}
	\]
	where $C_2=18^2/19^2/2$. Hence we conclude the proof.
\end{proof}

% \begin{lem}{\label{lmbound1}}
% Under the same conditions as Lemma \ref{lmtesting}, assume
% $\max\{s^*\log(p\sqrt{n/s^*}), (L+1)s^*\log N\} = o(n)$. Then, with dominating probability, we have
% \begin{equation}\label{eqbound1}
% \begin{split}
% \int d^2(P_{\theta}, P_0)\widehat{q}(\theta)d\theta \leq C\varepsilon^{*2}_n + \frac{3}{n}\inf_{q(\theta) \in \mathcal Q}\Bigl\{ \mbox{KL}(q(\theta)||\pi(\theta|\lambda))
% + \int l_n(P_0, P_{\theta})q(\theta)d\theta \Bigr\},
% \end{split}
% \end{equation}
% where $\varepsilon^*_n=Mn^{-1/2}\sqrt{(L+1)s^*\log N + s^*\log(p\sqrt{n/s^*})}\log^\delta(n)$ for any $\delta\geq1$ and some large constant $M$.
% \end{lem}
Lemma \ref{lmdonsker} restates the  Donsker and Varadhan's representation for the $\mbox{KL}$ divergence, whose proof can be found in \cite{Boucheron2013Concentration}. 
\begin{lem}{\label{lmdonsker}}
	For any probability measure $\mu$ and any measurable function $h$ with $e^h \in L_1(\mu)$,
	$$
	\log \int e^{h(\eta)}\mu(d\eta) = \sup_{\rho}\left[\int h(\eta)\rho(d \eta) - \mbox{KL}(\rho||\mu)\right].
	$$
\end{lem}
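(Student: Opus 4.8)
The plan is to prove the identity by exhibiting the maximizing measure explicitly and reducing the claim to the nonnegativity of the KL divergence (Gibbs' inequality). Since $e^h \in L_1(\mu)$, the normalizing constant $Z := \int e^{h(\eta)}\mu(d\eta)$ is finite and positive, so I can define the tilted (Gibbs) probability measure $\rho^*$ through its Radon--Nikodym derivative $d\rho^*/d\mu = e^h/Z$. Note that any $\rho$ failing to be absolutely continuous with respect to $\mu$ contributes $\mbox{KL}(\rho||\mu)=+\infty$ to the objective and hence cannot be a maximizer; thus without loss of generality the supremum may be taken over $\rho\ll\mu$.

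First I would establish the key algebraic identity: for every probability measure $\rho\ll\mu$,
$$\int h \, d\rho - \mbox{KL}(\rho||\mu) = \log Z - \mbox{KL}(\rho||\rho^*).$$
To see this, substitute $e^h = Z\,(d\rho^*/d\mu)$ and expand $\mbox{KL}(\rho||\mu)=\int\log(d\rho/d\mu)\,d\rho$, so that the left-hand side becomes $\int\log\bigl((d\rho^*/d\mu)/(d\rho/d\mu)\bigr)\,d\rho + \log Z = \log Z - \int\log(d\rho/d\rho^*)\,d\rho$; the remaining integral is precisely $\mbox{KL}(\rho||\rho^*)$ by the chain rule for Radon--Nikodym derivatives.

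From this identity both halves of the claimed equality follow immediately. Because $\mbox{KL}(\rho||\rho^*)\geq 0$ for every $\rho$---a consequence of Jensen's inequality applied to the strictly convex map $t\mapsto t\log t$---the right-hand side is bounded above by $\log Z$, giving $\sup_\rho\{\int h\,d\rho - \mbox{KL}(\rho||\mu)\}\leq \log Z$. Conversely, choosing $\rho=\rho^*$ makes $\mbox{KL}(\rho^*||\rho^*)=0$, so the objective attains the value $\log Z$; hence the supremum equals $\log Z$ and is realized by the Gibbs measure.

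The main technical point to watch is integrability: the individual terms $\int h\,d\rho^*$ and $\mbox{KL}(\rho^*||\mu)$ may each be infinite even though their difference equals $\log Z$. I would address this by either restricting attention to the class of $\rho$ for which $\int h\,d\rho$ is well defined and $\mbox{KL}(\rho||\mu)<\infty$ (which already contains the maximizer whenever $\log Z$ is finite), or by verifying that the cancellation $\mbox{KL}(\rho^*||\mu)=\int h\,d\rho^* - \log Z$ holds under the convention that matching infinities cancel. All other steps are routine measure-theoretic manipulations, so this integrability bookkeeping is the only genuine obstacle.
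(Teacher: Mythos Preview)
Your proof is correct and follows the standard Gibbs-measure argument for the Donsker--Varadhan variational formula. The paper itself does not supply a proof of this lemma; it merely cites \cite{Boucheron2013Concentration}, so there is no ``paper's own proof'' to compare against---your argument is precisely the classical one found in that reference.
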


{\noindent\bf Proof of Lemma 4.2}
\begin{proof}
	Denote $\Theta_n$ as the truncated parameter space $\{\theta: \sum^T_{i=1}1(\theta_i \neq 0) \leq s_n\}$, where $s_n$ is defined in Lemma \ref{lmtesting}. Noting that 
	\begin{equation} \label{eq:decomp}
	\int_{\theta \in \Theta} d^2(P_{\theta}, P_0)\widehat{q}(\theta)d\theta = \int_{\theta \in \Theta_n} d^2(P_{\theta}, P_0)\widehat{q}(\theta)d\theta + \int_{\theta \in \Theta_n^c} d^2(P_{\theta}, P_0)\widehat{q}(\theta)d\theta,
	\end{equation}
	it suffices to find upper bounds of the two components in RHS of ($\ref{eq:decomp}$).
	
	We start with the first component. Denote $\widetilde{\pi}(\theta)$ to be the truncated prior $\pi(\theta)$ on
	set $\Theta_n$, i.e., $\widetilde{\pi}(\theta)= \pi(\theta)1(\theta\in\Theta_n)/\pi(\Theta_n)$, then by Lemma \ref{lmtesting} and the same argument used in Theorem 3.1 of \cite{Pati2018on}, it could be shown
	\begin{equation}\label{eqn:song:1}
	\int_{\Theta_n}\eta(P_{\theta}, P_0)\widetilde{\pi}(\theta) d\theta \leq e^{C_0n\varepsilon^{*2}_n}, \mbox{w.h.p.}
	\end{equation}
	for some $C_0>0$, where $\log\eta(P_{\theta}, P_0) = l_n(P_{\theta}, P_0) + \frac{n}{3}d^2(P_{\theta}, P_0)$.
	We further denote the $\widehat{q}(\theta)$ restricted on $\Theta_n$ as $\widecheck{q}(\theta)$, i.e.,
	$\widecheck{q}(\theta)= \widehat q(\theta)1(\theta\in\Theta_n)/\widehat q(\Theta_n)$, then by Lemma \ref{lmdonsker} and (\ref{eqn:song:1}), w.h.p.,
	\begin{equation}\label{eqn:2}
	\begin{split}
	&\frac{n}{3\widehat q(\Theta_n)}\int_{\Theta_n} d^2(P_{\theta}, P_0)\widehat{q}(\theta)d\theta= 
	\frac{n}{3}\int_{\Theta_n} d^2(P_{\theta}, P_0)\widecheck{q}(\theta)d\theta \\
	\leq& Cn\varepsilon^{*2}_n + \mbox{KL}(\widecheck{q}(\theta)||\widetilde{\pi}(\theta)) - \int_{\Theta_n} l_n(P_{\theta}, P_0)\widecheck{q}(\theta) d\theta.\\
	%\leq& Cn\varepsilon^2_n + \mbox{KL}({q^*}(\theta)||\widetilde{\pi}(\theta)) - \int_{\Theta_n} l_n(P_{\theta}, P_0){q^*}(\theta) d\theta,
	\end{split}
	\end{equation}
	%where \textcolor{red}{$q^*$ is the special solution used in Lemma 4., Jincheng, finish this part...}
	Furthermore,
	\begin{equation*}
	\begin{split}
	\mbox{KL}(\widecheck{q}(\theta)||\widetilde{\pi}(\theta))&=\frac{1}{\widehat q(\Theta_n)} \int_{\theta \in \Theta_n} \log \frac{\widehat{q}(\theta)}{\pi(\theta)} \widehat{q}(\theta)d\theta + \log \frac{\pi(\Theta_n)}{\widehat q(\Theta_n)}\\
	&=\frac{1}{\widehat q(\Theta_n)} \mbox{KL}(\widehat{q}(\theta)||\pi(\theta)) - \frac{1 }{\widehat q(\Theta_n)} \int_{\theta \in \Theta_n^c} \log \frac{\widehat{q}(\theta)}{\pi(\theta)} \widehat{q}(\theta)d\theta + \log \frac{\pi(\Theta_n)}{\widehat q(\Theta_n)},
	\end{split}
	\end{equation*}
	and similarly,
	\begin{equation*}
	\begin{split}
	\int_{\Theta_n} l_n(P_{\theta}, P_0)\widecheck{q}(\theta) d\theta  =\frac{1}{\widehat q(\Theta_n)} \int_{\Theta} l_n(P_{\theta}, P_0)\widehat{q}(\theta) d\theta-\frac{1}{\widehat q(\Theta_n)} \int_{\Theta_n^c} l_n(P_{\theta}, P_0)\widehat{q}(\theta) d\theta.
	\end{split}
	\end{equation*}
	Combining the above two equations together, we have
	\begin{equation}\label{eqn:1}
	\begin{split}
	&\frac{n}{3\widehat q(\Theta_n)}\int_{\Theta_n} d^2(P_{\theta}, P_0)\widehat{q}(\theta)d\theta\leq Cn\varepsilon_n^{*2}+ \mbox{KL}(\widecheck{q}(\theta)||\widetilde{\pi}(\theta))-\int_{\Theta_n} l_n(P_{\theta}, P_0)\widecheck{q}(\theta) d\theta\\
	=&Cn\varepsilon_n^{*2}+\frac{1}{\widehat q(\Theta_n)}\left( \mbox{KL}(\widehat{q}(\theta)||\pi(\theta))-\int_{\Theta} l_n(P_{\theta}, P_0)\widehat{q}(\theta) d\theta \right) \\
	&- \frac{1}{\widehat q(\Theta_n)}\left( \int_{ \Theta_n^c} \log \frac{\widehat{q}(\theta)}{\pi(\theta)} \widehat{q}(\theta)d\theta -\int_{\Theta_n^c} l_n(P_{\theta}, P_0)\widehat{q}(\theta) d\theta\right)+ \log \frac{\pi(\Theta_n)}{\widehat q(\Theta_n)}.
	\end{split}
	\end{equation}
	
	The second component of ($\ref{eq:decomp}$) trivially satisfies that $\int_{\theta \in \Theta_n^c} d^2(P_{\theta}, P_0)\widehat{q}(\theta)d\theta \leq \int_{\theta \in \Theta_n^c}\widehat{q}(\theta)d\theta =\widehat q(\Theta_n^c)$. Thus, together with (\ref{eqn:1}), we have that w.h.p.,
	\begin{equation}\label{eqn:sum}
	\begin{split}
	& \int d^2(P_{\theta}, P_0)\widehat{q}(\theta)d\theta 
	\leq 3\widehat q(\Theta_n)C\varepsilon^{*2}_n + \frac{3}{n}\left( \mbox{KL}(\widehat{q}(\theta)||\pi(\theta))-\int_{\Theta} l_n(P_{\theta}, P_0)\widehat{q}(\theta) d\theta \right) \\
	&+ \frac{3}{n} \int_{\Theta_n^c} l_n(P_{\theta}, P_0)\widehat{q}(\theta) d\theta+ \frac{3}{n} \int_{ \Theta_n^c} \log \frac{\pi(\theta)}{\widehat{q}(\theta)} \widehat{q}(\theta)d\theta + \frac{3\widehat q(\Theta_n)}{n}\log \frac{\pi(\Theta_n)}{\widehat q(\Theta_n)}+\widehat q(\Theta_n^c).
	\end{split}
	\end{equation}
	
	The second term in the RHS of (\ref{eqn:sum}) is bounded by $C'(r^*_n
	+ \xi^*_n)$ w.h.p., due to Lemma 4.1, where $C'$ is either positive constant or diverging sequence depending on whether $n(r_n^*+\xi_n^*)$ diverges.
	
	The third term in the RHS of (\ref{eqn:sum}) is bounded by 
	\[\begin{split}
	& \frac{3}{n}\int_{\Theta_n^c} l_n(P_{\theta}, P_0)\widehat{q}(\theta) d\theta\\
	= & \frac{3}{2n\sigma^2_{\epsilon}}\int_{\Theta_n^c} \left[\sum^n_{i=1}\epsilon_i^2 -\sum^n_{i=1} (\epsilon_i+f_0(X_i)-f_{\theta}(X_i))^2\right]\widehat{q}(\theta) d\theta\\
	= &  \frac{3}{2n\sigma^2_{\epsilon}}\int_{\Theta_n^c} \left[-2\sum^n_{i=1} (\epsilon_i\times(f_0(X_i)-f_{\theta}(X_i))-\sum^n_{i=1}(f_0(X_i)-f_{\theta}(X_i))^2\right]\widehat{q}(\theta) d\theta\\
	= &\frac{3}{2n\sigma^2_{\epsilon}}\left\{
	-2\sum^n_{i=1}\epsilon_i\int_{\Theta_n^c}(f_0(X_i)-f_{\theta}(X_i))\widehat{q}(\theta)d\theta - \int_{\Theta_n^c}\sum^n_{i=1}(f_0(X_i)-f_{\theta}(X_i))^2\widehat{q}(\theta) d\theta
	\right\}.
	\end{split}
	\]
	Conditional on $X_i$'s, $-2\sum^n_{i=1}\epsilon_i\int_{\Theta_n^c}(f_0(X_i)-f_{\theta}(X_i))\widehat{q}(\theta)d\theta$ follows a normal distribution $\mathcal{N}(0, V^2)$, where 
	$V^2=4\sigma^2_{\epsilon}\sum^n_{i=1}(\int_{\Theta_n^c}(f_0(X_i)-f_{\theta}(X_i))\widehat{q}(\theta)d\theta)^2\leq 4\sigma^2_{\epsilon} \int_{\Theta_n^c}\sum^n_{i=1}(f_0(X_i)-f_{\theta}(X_i))^2\widehat{q}(\theta)d\theta$. Thus conditional on $X_i$'s, the third term in the RHS of (\ref{eqn:sum}) is bounded by 
	\begin{equation} \label{eq:third}
	\frac{3}{2n\sigma^2_{\epsilon}}\left[\mathcal{N}(0,V^2)-\frac{V^2}{4\sigma^2_{\epsilon}}\right].
	\end{equation}
	Noting that $\mathcal{N}(0,V^2)=O_p(M_nV)$ for any diverging sequence $M_n$, (\ref{eq:third}) is further bounded, w.h.p., by
	\[
	\frac{3}{2n\sigma^2_{\epsilon}}(M_nV-\frac{V^2}{4\sigma^2_{\epsilon}})\leq \frac{3}{2n\sigma^2_{\epsilon}}\sigma^2_{\epsilon}M_n^2.
	\] 
	Therefore, the third term in the RHS of (\ref{eqn:sum}) can be bounded by $\varepsilon_n^{*2}$ w.h.p. (by choosing
	$M_n^2=n\varepsilon_n^{*2}$).

	The fourth term in the RHS of (\ref{eqn:sum}) is bounded by 
	\[
	\frac{3}{n} \int_{ \Theta_n^c} \log \frac{\pi(\theta)}{\widehat{q}(\theta)} \widehat{q}(\theta)d\theta\leq \frac{3}{n}\widehat q( \Theta_n^c)\log \frac{\pi( \Theta_n^c)}{\widehat q( \Theta_n^c)}
	\leq \frac{3}{n}\sup_{x\in(0,1)}[x\log(1/x)]=O(1/n).
	\]

	Similarly, the fifth term in the RHS of (\ref{eqn:sum}) is bounded by $O(1/n)$.
	
	For the last term in the RHS of (\ref{eqn:sum}), by Lemma \ref{5term} in below, w.h.p., $\widehat q(\Theta_n^c)\leq \varepsilon_n^{*2}$.
	
	Combine all the above result together, w.h.p.,
	\[
	\begin{split}
	& \int d^2(P_{\theta}, P_0)\widehat{q}(\theta)d\theta 
	\leq C\varepsilon^{*2}_n + \frac{3}{n}\left( \mbox{KL}(\widehat{q}(\theta)||\pi(\theta))-\int_{\Theta} l_n(P_{\theta}, P_0)\widehat{q}(\theta) d\theta \right) + O(1/n),
	\end{split}
	\]
	where $C$ is some constant.
\end{proof}

\begin{lem}[Chernoff bound for Poisson tail]\label{lemmad}
	Let $X\sim \mbox{poi}(\lambda)$ be a Poisson random variable. For any $x>\lambda$,
	\[ 
	P(X \geq x) \leq \frac{(e \lambda)^x e^{-\lambda}}{x^x}.
	\]
\end{lem}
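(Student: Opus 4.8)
The plan is to apply the standard Chernoff bounding technique to the upper tail of the Poisson law. First I would fix an arbitrary $t>0$ and apply Markov's inequality to the nonnegative random variable $e^{tX}$, obtaining
\[
P(X\geq x)=P(e^{tX}\geq e^{tx})\leq e^{-tx}\,\mathbb{E}[e^{tX}].
\]
I would then substitute the moment generating function of $X\sim\mbox{poi}(\lambda)$, namely $\mathbb{E}[e^{tX}]=\exp\{\lambda(e^t-1)\}$ (a one-line computation from the Poisson series), to reach the family of bounds $P(X\geq x)\leq\exp\{\lambda(e^t-1)-tx\}$, one for each $t>0$.

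The second step is to minimize the exponent over the free parameter $t$. Differentiating $\lambda(e^t-1)-tx$ in $t$ and setting the derivative to zero gives the first-order condition $\lambda e^t=x$, whose solution is $t^\ast=\log(x/\lambda)$. Here the hypothesis $x>\lambda$ does exactly the work needed: it guarantees $t^\ast>0$, so the optimizer lies in the admissible region in which the Chernoff inequality was derived. Plugging $t^\ast$ back into the exponent yields
\[
\lambda\Bigl(\tfrac{x}{\lambda}-1\Bigr)-x\log\tfrac{x}{\lambda}=x-\lambda-x\log\tfrac{x}{\lambda}.
\]

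Finally I would check that exponentiating this optimized exponent reproduces the stated closed form: since $\exp\{x-\lambda-x\log(x/\lambda)\}=e^{x}e^{-\lambda}(\lambda/x)^{x}=(e\lambda)^{x}e^{-\lambda}/x^{x}$, the right-hand side of the lemma is recovered and the argument is complete. I do not anticipate any genuine difficulty here, as this is a textbook exponential tail bound; the only point deserving attention is the role of the assumption $x>\lambda$ in keeping the Chernoff optimizer $t^\ast$ positive, after which everything reduces to a routine algebraic rearrangement of the optimized exponent into the stated product form.
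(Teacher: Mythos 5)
Your proof is correct and is exactly the canonical argument the paper intends: the lemma is stated in the supplement without proof as a standard fact, and its very name (``Chernoff bound'') points to precisely your derivation via Markov's inequality applied to $e^{tX}$, the Poisson moment generating function $\exp\{\lambda(e^{t}-1)\}$, and optimization at $t^{\ast}=\log(x/\lambda)$, where the hypothesis $x>\lambda$ ensures $t^{\ast}>0$. Your algebra also checks out, since the optimized exponent $x-\lambda-x\log(x/\lambda)$ exponentiates to $(e\lambda)^{x}e^{-\lambda}/x^{x}$ as claimed.
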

%\blue{Lemma 1.8 is consistent with Condition 4.4 whenever $M \leq \frac{s_n}{s^*}=\log^{2\delta-1}n$.}
\begin{lem}\label{5term}
	If $\lambda\leq T^{-1}\exp\{-M nr_n^*/s_n\}$ for any positive diverging sequence $M\rightarrow\infty$, then w.h.p., $\widehat q(\Theta_n^c)= O(\varepsilon_n^{*2})$.
\end{lem}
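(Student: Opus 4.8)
The plan is to exploit the product (mean-field) structure of $\widehat q\in\mathcal Q$ in (\ref{vbpost}) to turn $\widehat q(\Theta_n^c)$ into a tail probability for a sum of independent Bernoulli variables, and then to show that the \emph{expected} number of active edges under $\widehat q$ is of smaller order than $s_n$, so that a Chernoff bound (Lemma \ref{lemmad}) renders the tail super-polynomially small. First I would note that under $\widehat q$ the indicators $\gamma_1,\dots,\gamma_T$ are independent with $\gamma_i\sim\mathrm{Bern}(\widehat\phi_i)$, so $\Theta_n^c=\{\sum_i\gamma_i>s_n\}$ and $S:=\sum_{i=1}^T\gamma_i$ is a Poisson--binomial variable with mean $m:=\sum_{i=1}^T\widehat\phi_i$. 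Since the moment generating function of $S$ is dominated by that of $\mathrm{Poi}(m)$, the Poisson--Chernoff bound behind Lemma \ref{lemmad} applies to $S$ and gives, for $s_n>em$,
\[
\widehat q(\Theta_n^c)=P_{\widehat q}(S>s_n)\le (em/s_n)^{s_n}.
\]
Hence it suffices to prove $m=o(s_n)$; in fact once $m\le s_n/e^2$ this tail is at most $e^{-s_n}$, which is far smaller than $\varepsilon_n^{*2}$ because $s_n=s^*\log^{2\delta-1}(n)\to\infty$.

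The central task is therefore to control $m=\sum_i\widehat\phi_i$, and I would do this through the KL penalty. As both $\widehat q$ and $\pi(\cdot|\lambda)$ factorize, $\mathrm{KL}(\widehat q\|\pi(\cdot|\lambda))=\sum_i\bigl[\mathrm{KL}(\mathrm{Bern}(\widehat\phi_i)\|\mathrm{Bern}(\lambda))+\widehat\phi_i\,\mathrm{KL}(\mathcal N(\widehat\mu_i,\widehat\sigma_i^2)\|\mathcal N(0,\sigma_0^2))\bigr]$, and both summands are nonnegative, so $\sum_i g(\widehat\phi_i)\le\mathrm{KL}(\widehat q\|\pi(\cdot|\lambda))$ with $g(\phi)=\mathrm{KL}(\mathrm{Bern}(\phi)\|\mathrm{Bern}(\lambda))$. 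For any $\phi\ge 1/T$ a direct computation gives $g(\phi)\ge\phi\log(1/(T\lambda))-\phi\ge\tfrac12\phi\,(Mnr_n^*/s_n)$ for large $n$, using $T\lambda\le\exp\{-Mnr_n^*/s_n\}$; the coordinates with $\widehat\phi_i<1/T$ contribute at most $T\cdot(1/T)=1$ to $m$. Consequently
\[
m\le \frac{2s_n}{Mnr_n^*}\,\mathrm{KL}(\widehat q\|\pi(\cdot|\lambda))+1 .
\]
It then remains to bound the KL numerator. Writing the negative ELBO as $\Psi(\widehat q)=\mathrm{KL}(\widehat q\|\pi(\cdot|\lambda))+\int_\Theta l_n(P_0,P_\theta)\widehat q\,d\theta$, optimality of $\widehat q$ together with Lemma \ref{thmbound2} gives $\Psi(\widehat q)\le Cn(r_n^*+\xi_n^*)=O(nr_n^*)$, while reusing the Gaussian-tail estimate for the third term and the sieve covering-number control from the proof of Lemma \ref{lmbound1} (neither of which invokes the present lemma, so there is no circularity) yields $\int_\Theta l_n(P_0,P_\theta)\widehat q\,d\theta\ge -Cn\varepsilon_n^{*2}$, whence $\mathrm{KL}(\widehat q\|\pi(\cdot|\lambda))=O(n\varepsilon_n^{*2})$. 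Plugging in and using $\varepsilon_n^{*2}\asymp r_n^*\log^{2\delta}(n)$ gives $m\lesssim s_n\log^{2\delta}(n)/M+1$.

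The hard part is precisely this sharp control of the expected active set $m$: I must guarantee that the per-edge prior penalty $\log(1/\lambda)$ genuinely dominates the per-edge reduction in the reconstruction term, so that $m=o(s_n)$ rather than merely $m=o(s_n\log n)$. This is where the smallness of $\lambda$ enters decisively — every spuriously activated edge costs $\asymp\log(1/\lambda)\asymp Mnr_n^*/s_n$ in KL but buys only $O(1)$ in log-likelihood, so the ELBO-optimal $\widehat q$ cannot afford to retain many active edges. Tightening the lower bound on $\int_\Theta l_n(P_0,P_\theta)\widehat q\,d\theta$ beyond the entropy-limited $-Cn\varepsilon_n^{*2}$ (equivalently, showing the optimizer does not overfit) is the delicate point, and it is ensured by taking $\lambda$ small enough, i.e.\ $M$ large, as permitted by Condition \ref{lambda}. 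Once $m=o(s_n)$ is secured, the Poisson tail bound of Lemma \ref{lemmad} closes the argument, delivering $\widehat q(\Theta_n^c)$ that is super-polynomially small and in particular $O(\varepsilon_n^{*2})$.
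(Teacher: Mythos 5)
Your overall architecture genuinely matches the paper's proof of this lemma: both arguments extract a w.h.p.\ upper bound on $\mathrm{KL}(\widehat q\|\pi(\cdot|\lambda))$ from the optimal ELBO of Lemma \ref{thmbound2}, lower-bound it coordinatewise by $\sum_i\mathrm{KL}(\widehat q(\gamma_i)\|\mathrm{Bern}(\lambda))$, split coordinates at the threshold $1/T$, and finish with a concentration bound on $\sum_i\gamma_i$ under $\widehat q$. Indeed your finishing step is a clean simplification: the deterministic estimate $\sum_{\widehat\phi_i<1/T}\widehat\phi_i\leq 1$ plus a single Poisson--binomial Chernoff bound $(em/s_n)^{s_n}$ unifies what the paper does in two cases (Bernstein on $A$, stochastic domination by $\mathrm{Bin}(T,1/T)$ plus Lemma \ref{lemmad} on $A^c$), and your closing comparison $e^{-s_n}=O(\varepsilon_n^{*2})$ via $\log(1/\varepsilon_n^{*2})=O(\log n)\prec s_n$ is exactly the paper's.

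However, there is a genuine gap at the step you yourself flag as "the hard part," and your resolution of it does not work. You only establish $\mathrm{KL}(\widehat q\|\pi(\cdot|\lambda))=O(n\varepsilon_n^{*2})=O(nr_n^*\log^{2\delta}n)$, which gives $m\lesssim s_n\log^{2\delta}(n)/M+1$. The lemma's hypothesis quantifies over \emph{any} positive diverging $M$ (e.g.\ $M=\log\log n$), under which this is $\gg s_n$ and the Chernoff step never fires; and your escape — "take $M$ large, as permitted by Condition \ref{lambda}" — is not available, since the concrete $\lambda$ of Theorem \ref{mainthm} corresponds to an effective $M\asymp\log^{2\delta-1}n$, still short of the $\log^{2\delta}n$ your bound would need to absorb. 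The paper instead proves the sharper bound $\mathrm{KL}(\widehat q\|\pi(\cdot|\lambda))\leq Cnr_n^*$ w.h.p., with no covering-number or testing input at all: writing $\int_\Theta l_n(P_0,P_\theta)\widehat q\,d\theta=(2\sigma_\epsilon^2)^{-1}(c_0+Z)$ with $c_0=\int_\Theta\|f_\theta(X)-f_0(X)\|_2^2\,\widehat q(d\theta)\geq 0$ and $Z=O_p(\sqrt{c_0})$, one gets $-\int_\Theta l_n\widehat q\,d\theta\leq(2\sigma_\epsilon^2)^{-1}\sup_{c\geq 0}\bigl(M_n\sqrt{c}-c\bigr)=O_p(M_n^2)$ for an arbitrarily slowly diverging $M_n$, i.e.\ $o(nr_n^*)$ — the reconstruction term can only help, it cannot subsidize the KL by anything of order $n\varepsilon_n^{*2}$. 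Relatedly, your appeal to "the sieve covering-number control from the proof of Lemma \ref{lmbound1}" to obtain $\int_\Theta l_n\widehat q\,d\theta\geq -Cn\varepsilon_n^{*2}$ is unsubstantiated: that machinery controls testing errors on $\Theta_n$ only, and converting it into a lower bound for the integral over all of $\Theta$ would require knowing how $\widehat q$ splits its mass between $\Theta_n$ and $\Theta_n^c$ — precisely what this lemma is meant to establish. With the paper's $Cnr_n^*$ bound substituted in, the rest of your argument ($m\leq 2Cs_n/M+1=o(s_n)$, then $\widehat q(\Theta_n^c)\leq e^{-s_n}=O(\varepsilon_n^{*2})$) goes through.
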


\begin{proof}
	By Lemma 4.1, we have that w.h.p.,
	\[\begin{split}
	&\mbox{KL}(\widehat q(\theta)||\pi(\theta|\lambda))+ \int_{\Theta} l_n(P_0, P_{\theta})\widehat q(\theta)d\theta =\inf_{q_{\theta} \in \mathcal Q}\Bigl\{ \mbox{KL}(q(\theta)||\pi(\theta|\lambda))
	+ \int_{\Theta} l_n(P_0, P_{\theta})q(\theta)(d\theta) \Bigr\}\\
	\leq& Cnr_n^* \quad(\mbox{Note that }r_n^*\asymp\xi^*_n)
	\end{split}\] where $C$ is either a constant or any diverging sequence, depending on whether $n r_n^*$ diverges.
	By the similar argument used in the proof of Lemma 4.1, 
	\[
	\int_{\Theta} l_n(P_0, P_{\theta})\widehat q(\theta)d\theta \leq \frac{1}{2\sigma_\epsilon^2}\left(\int_{\Theta} ||f_{\theta}(X) - f_0(X)||^2_2 \widehat q(\theta)(d\theta)
	+Z
	\right) 
	\]
	where $Z$ is a normal distributed $\mathcal{N}(0, \sigma^2_{\epsilon}c_0')$, where $c_0'\leq c_0=\int_{\Theta} ||f_{\theta}(X) - f_0(X)||^2_2 \widehat q(\theta)(d\theta)$. Therefore, $-\int_{\Theta} l_n(P_0, P_{\theta})\widehat q(\theta)d\theta = (1/2\sigma_\epsilon^2)[-c_0+O_p(\sqrt{c_0})]$, and 
	$\mbox{KL}(\widehat q(\theta)||\pi(\theta|\lambda))\leq Cnr_n^*+(1/2\sigma_\epsilon^2)[-c_0+O_p(\sqrt{c_0})]$. Since $Cnr_n^*\rightarrow\infty$, we must have w.h.p., $\mbox{KL}(\widehat q(\theta)||\pi(\theta|\lambda))\leq Cnr_n^*/2$.
	On the other hand, 
	\begin{equation}\label{up1}
	\begin{split}
	&\mbox{KL}(\widehat q(\theta)||\pi(\theta|\lambda))=\sum_{i=1}^T\mbox{KL}(\widehat q(\theta_i)||\pi(\theta_i|\lambda))
	\geq \sum_{i=1}^T\mbox{KL}(\widehat q(\gamma_i)||\pi(\gamma_i|\lambda))\\
	=&\sum_{i=1}^T \left[\widehat q(\gamma_i=1)\log \frac{\widehat q(\gamma_i=1)}{\lambda}+ \widehat q(\gamma_i=0)\log \frac{\widehat q(\gamma_i=0)}{1-\lambda}\right].
	\end{split}
	\end{equation}
	Let us choose $\lambda_0=1/T$, and $A=\{i: \widehat q(\gamma_i=1)\geq \lambda_0\}$, then the above inequality (\ref{up1}) implies that $\sum_{i\in A}\widehat q(\gamma_i=1)\log (\lambda_0/\lambda)\leq Cnr_n^*/2$. Noting that $\lambda\leq T^{-1}\exp\{-M nr_n^*/s_n\}$, it further implies $\sum_{i\in A}\widehat q(\gamma_i=1)\leq s_n/M\prec s_n$.
	
	Under distribution $\widehat q$, by Bernstein inequality,  
	\[
	\begin{split}
	& Pr(\sum_{i\in A}\gamma_i\geq 2s_n/3)\leq
	Pr(\sum_{i\in A}\gamma_i\geq s_n/2+\sum_{i\in A}\mathbb{E}(\gamma_i))\leq 
	\exp \left( -\frac{s_n^2/8}{\sum_{i\in A}\mathbb{E} [\gamma_i^2]+s_n/6} \right)\\=&\exp \left( -\frac{s_n^2/8}{\sum_{i\in A}\widehat q(\gamma_i=1)+s_n/6} \right)
	\leq \exp \left( -cs_n\right)=O(\varepsilon_n^{*2})
	\end{split}\]
	for some constant $c>0$, where the last inequality holds since $\log(1/\varepsilon_n^{*2}) = O(\log n)\prec s_n$.
	
	Under distribution $\widehat q$, $\sum_{i\notin A}\gamma_i$ is stochastically smaller than $Bin(T,\lambda_0)$. 
	Since $T\rightarrow\infty$, then by Lemma \ref{lemmad},  
	\[
	\begin{split}
	&Pr(\sum_{i\notin A}\gamma_i\geq s_n/3)\leq Pr(Bin(T,\lambda_0)\geq s_n/3)\rightarrow Pr(\mbox{poi}(1)\geq s_n/3)\\
	= & O(\exp\{-c's_n\})=O(\varepsilon_n^{*2})
	\end{split}
	\]
	for some $c'>0$. Trivially, it implies that w.h.p, $Pr(\sum_{i}\gamma_i\geq s_n)=O(\varepsilon_n^{*2})$ for VB posterior $\widehat q$.
	
\end{proof}

\subsection{Main theorem}

\begin{thm}
	Under Conditions 4.1-4.2, 4.4 and set $-\log\lambda =\log(T)+\delta[(L+1)\log N + \log \sqrt{n}p]$ for any constant $\delta>0$, we then have that w.h.p.,
	\[\int_{\Theta} d^2(P_{\theta}, P_0)\widehat{q}(\theta)d\theta \leq C\varepsilon^{*2}_n + C'(r_n^* +\xi^*_n),\] where $C$ is some positive constant and $C'$ is any diverging sequence. If $\|f_0\|_\infty<F$, and we truncated the VB posterior on $\Theta_F=\{\theta: \|f_\theta\|_\infty\leq F\}$, i.e., $\widehat q_F\propto\widehat q1(\theta\in\Theta_F)$, then, w.h.p.,
	\[
	\int_{\Theta_F} \mathbb{E}_X|f_{\theta}(X) - f_0(X)|^2\widehat{q}_F(\theta)d\theta\leq\frac{C\varepsilon^{*2}_n + C'(r_n^* +\xi^*_n)}{C_F \widehat q(\Theta_F)}
	\]
	where $C_F = [1-\exp(-4F^2/8\sigma^2_{\epsilon})]/4F^2$, and $\widehat q(\Theta_F)$ is the VB posterior mass of $\Theta_F$.
\end{thm}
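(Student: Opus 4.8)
The plan is to assemble Lemma \ref{thmbound2} and Lemma \ref{lmbound1} into the Hellinger bound and then convert that into the truncated $L_2$ bound. The first task is to confirm that the prescribed $\lambda$, with $-\log\lambda = \log T + \delta[(L+1)\log N + \log\sqrt n p]$, simultaneously satisfies the hypotheses of both lemmas. For Condition \ref{lambda} (used by Lemma \ref{thmbound2}) I would note that $\log T \lesssim (L+1)\log N$ because $T \asymp LN^2$, and that $\log\sqrt n p = \log(p\sqrt{n/s^*}) + \tfrac12\log s^*$ with $\log s^* \lesssim (L+1)\log N$; hence $\log(1/\lambda) = O\{(L+1)\log N + \log(p\sqrt{n/s^*})\}$, while $\lambda \le 1/T$ makes the companion requirement on $\log(1/(1-\lambda))$ immediate. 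For Lemma \ref{lmbound1} I would check $\lambda \le T^{-1}\exp\{-Mnr_n^*/s_n\}$ for a diverging $M$: since $nr_n^*/s_n = \{(L+1)\log N + \log(p\sqrt{n/s^*})\}/\log^{2\delta-1}(n)$ and $\log\sqrt n p \ge \log(p\sqrt{n/s^*})$, it suffices to take $M \asymp \log^{(2\delta-1)/2}(n)$, which diverges yet is $o(\log^{2\delta-1}(n))$, so the inequality holds for large $n$.

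With both hypotheses verified, the Hellinger bound follows by chaining the lemmas. I would substitute the Lemma \ref{thmbound2} bound, $\inf_{q\in\mathcal Q}\{\mbox{KL}(q\|\pi(\cdot|\lambda)) + \int_\Theta l_n(P_0,P_\theta)q\,d\theta\} \le Cn(r_n^*+\xi_n^*)$, into the right-hand side of (\ref{eqbound1}); the prefactor $3/n$ cancels the $n$, yielding $\int_\Theta d^2(P_\theta,P_0)\widehat q(\theta)\,d\theta \le C\varepsilon_n^{*2} + C'(r_n^*+\xi_n^*)$, where $C'$ is the (possibly diverging) sequence inherited from Lemma \ref{thmbound2}. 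This establishes the first display of the theorem.

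For the second display I would pass from Hellinger to $L_2$ on $\Theta_F$. On this set $\|f_\theta\|_\infty \le F$ and $\|f_0\|_\infty < F$, so $[f_\theta(X)-f_0(X)]^2 \le 4F^2$ and the argument $x = [f_\theta(X)-f_0(X)]^2/(8\sigma_\epsilon^2)$ lies in $[0, 4F^2/(8\sigma_\epsilon^2)]$. Because $1-e^{-x}$ is concave it dominates the chord joining its endpoints on this interval, which gives $1-e^{-x} \ge C_F[f_\theta(X)-f_0(X)]^2$ with exactly $C_F = [1-\exp(-4F^2/8\sigma_\epsilon^2)]/4F^2$; taking $\mathbb E_X$ produces $d^2(P_\theta,P_0) \ge C_F\,\mathbb E_X|f_\theta(X)-f_0(X)|^2$ for every $\theta\in\Theta_F$. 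Integrating against $\widehat q_F = \widehat q\,1(\theta\in\Theta_F)/\widehat q(\Theta_F)$ and using $\int_{\Theta_F} d^2\,\widehat q\,d\theta \le \int_\Theta d^2\,\widehat q\,d\theta$ converts the truncated-posterior $L_2$ integral into $\{C_F\widehat q(\Theta_F)\}^{-1}$ times the Hellinger integral already controlled by the first display, which is the claimed bound.

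The main obstacle I anticipate is the two-sided reconciliation of $\lambda$: it must decay at least as fast as $T^{-1}\exp\{-Mnr_n^*/s_n\}$ for the tail-mass control of Lemma \ref{lmbound1}, yet slowly enough that $\log(1/\lambda)$ stays within the $O\{(L+1)\log N + \log(p\sqrt{n/s^*})\}$ budget demanded by Condition \ref{lambda} for Lemma \ref{thmbound2}. Matching the $\log\sqrt n p$ term against $\log(p\sqrt{n/s^*})$ and choosing $M$ to diverge strictly slower than $\log^{2\delta-1}(n)$ is the only genuinely delicate bookkeeping; everything else is the concavity estimate and routine algebra.
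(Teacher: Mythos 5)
Your proposal is correct and follows essentially the same route as the paper's own proof: chain Lemma 4.1 into Lemma 4.2 after verifying the prescribed $\lambda$ satisfies both lemmas' hypotheses, then pass from Hellinger to $L_2$ on $\Theta_F$ via $d^2(P_\theta,P_0)\ge C_F\,\mathbb{E}_X|f_\theta(X)-f_0(X)|^2$ with $C_F=[1-\exp(-4F^2/8\sigma^2_{\epsilon})]/4F^2$. You in fact supply more detail than the paper (which asserts the $\lambda$ check and the $C_F$ inequality without derivation); the only minor slip is the intermediate claim $T\asymp LN^2$, which ignores the input-layer term $Np$, but since $\log p\le\log\bigl(p\sqrt{n/s^*}\bigr)$ your conclusion $\log(1/\lambda)=O\{(L+1)\log N+\log(p\sqrt{n/s^*})\}$ still holds.
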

\begin{proof}
	The convergence under squared Hellinger distance is directly result of Lemma 4.1 and 4.2, by simply checking the choice of $\lambda$ satisfies required conditions. The convergence under $L_2$ distance relies on inequality $d^2(P_{\theta}, P_0)\geq C_F\mathbb{E}_X|f_{\theta}(X) - f_0(X)|^2$ for $C_F = [1-\exp(-4F^2/8\sigma^2_{\epsilon})]/4F^2$ when both $f_\theta$ and $f_0$ are bounded by $F$. Then, w.h.p,
	\[
	\begin{split}
	&\int_{\Theta_F}  \mathbb{E}_X|f_{\theta}(X) - f_0(X)|^2\widehat{q}_F(\theta)d\theta\leq C_F^{-1} \int_{\Theta_F}  d^2(P_\theta, P_0)\widehat{q}_F(\theta)d\theta\\
	\leq &  \frac{1}{C_F \widehat q(\Theta_F)}\int_{\Theta} d^2(P_\theta, P_0)\widehat{q}(\theta)d\theta
	\leq \frac{C\varepsilon^{*2}_n + C'(r_n^* +\xi^*_n)}{C_F \widehat q(\Theta_F)}.
	\end{split}
	\]
	
	% Hence, we can quantify the generalization error rate.

\end{proof}

\section{Additional experimental results}
\subsection{Comparison between Bernoulli variable and the Gumbel softmax approximation}
Denote $\gamma_i \sim \mbox{Bern}(\phi_i)$ and $\widetilde{\gamma}_i \sim \mbox{Gumbel-softmax}(\phi_i, \tau)$, then we have that
\[\begin{split}
&\widetilde{\gamma}_i:=g_\tau(\phi_i,u_i) = (1+\exp(-\eta_i/\tau))^{-1}, \quad\mbox{where } \eta_i=\log \frac{\phi_i}{1-\phi_i} + \log \frac{u_i}{1-u_i}, \quad u_i \sim \mathcal{U}(0,1),\\
&\gamma_i := g(\phi_i,u_i) = 1(u_i\leq \phi_i) \quad\mbox{where }  u_i \sim \mathcal{U}(0,1).
\end{split}
\]
Fig \ref{fig:gumbel} demonstrates the functional convergence of  $g_\tau$ towards $g$ as $\tau$ goes to zero. In Fig \ref{fig:gumbel}(a),  by fixing $\phi_i(=0.9)$, we show $g_{\tau}$ converges to $g$ as a function of $u_i$. Fig \ref{fig:gumbel} (b) demonstrates that $g_{\tau}$ converges to $g$ as a function of $\alpha_i = \log(\phi_i/(1-\phi_i))$ when $u_i(=0.2)$ is fixed. These two figures show that as $\tau\rightarrow 0$, $g_\tau\rightarrow g$.
Formally, \citet{Maddison17Concrete} rigorously proved that %$P(\lim_{\tau \rightarrow 0} \widetilde{\gamma}_i=1)=\phi_i$, which implies 
$\widetilde{\gamma}_i$ converges to $\gamma_i$ in distribution as $\tau$ approaches 0.
% \begin{figure}[!htb]
%     \centering
%     \includegraphics[width=0.5\textwidth]{gumbel_softmax.png}
%     \caption{$\widetilde{\gamma_i}$ approaches $\gamma_i$ as $\tau$ approaches 0. $u_i$ is fixed at 0.5 and $\alpha_i = \log(\phi_i/(1-\phi_i)).$} 
%     \label{fig:gumbel}
% \end{figure} 

\begin{figure*}[!htb] 
	\centering
	\begin{subfigure}[b]{.4\textwidth}
		\centering
		\includegraphics[width=1\linewidth]{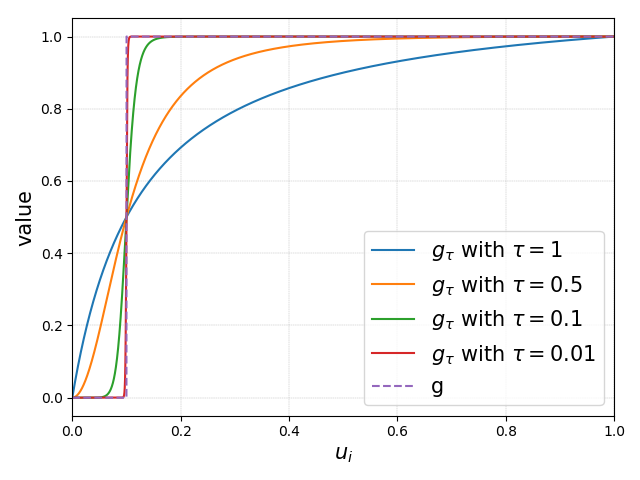}
		\caption{Fix $\phi_i=0.9$.}
	\end{subfigure}
	\begin{subfigure}[b]{.4\textwidth}
		\centering
		\includegraphics[width=1\linewidth]{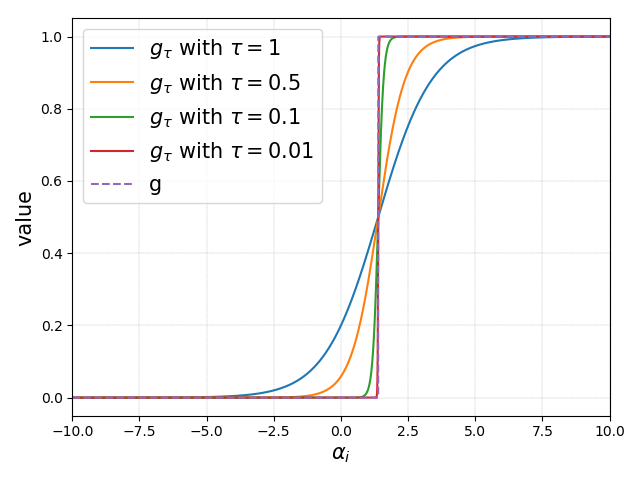}
		\caption{Fix $u_i=0.2$.}
	\end{subfigure}
	\caption{The convergence of $g_{\tau}$ towards $g$ as $\tau$ approaches 0.}
	\label{fig:gumbel}
\end{figure*}

\subsection{Algorithm implementation details for the numerical experiments}
\paragraph{Initialization}
As mentioned by \cite{sonderby2016How} and \cite{Molchanov2017Variational},  training sparse BNN with random initialization may lead to bad performance, since many of the weights could be pruned too early. In our case, we assign each of the weights and biases a inclusion variable, which could reduce to zero quickly in the early optimization stage if we randomly initialize them. As a consequence, we deliberately initialize $\phi_i$ to be close to 1 in our experiments. This initialization strategy ensures the training starts from a fully connected neural network, which is similar to start training from a pre-trained fully connected network as mentioned in \cite{Molchanov2017Variational}. The other two parameters $\mu_i$ and $\sigma_i$ are initialized randomly.

\paragraph{Other implementation details in simulation studies}
We set $K=1$ and $\mbox{learning rate} = 5\times 10^{-3}$ during training. For Simulation I, we choose batch size $m = 1024$ and $m = 128$ for (A) and (B) respectively, and run 10000 epochs for both cases. For simulation II, we use $m=512$ and run 7000 epochs. 
% The temperature $\tau$ is set to 0.5 in the experiments. It is also common to set up an annealing schedule for $\tau$, but we don't observe any significant improvement in accuracy from setting $\tau$ as constant.
Although it is common to set up an annealing schedule for temperature parameter $\tau$, we don't observe any significant performance improvement compared to setting $\tau$ as a constant, therefore we choose $\tau=0.5$ in all of our experiments. The optimization method used is Adam. 

The implementation details for UCI datasets and MNIST can be found in Section \ref{sec:uci} and \ref{sec:mnist} respectively.

\subsection{Toy example: linear regression}
% This section, we aim to show no diff between inverse cdf reparameterization and gumbel-softmax approximation via a toy example.
In this section, we aim to demonstrate that there is little difference between the results using inverse-CDF reparameterization and Gumbel-softmax approximation via a toy example.

% We study an example of sparse linear regression with sparse-and-slab modeling to illustrate the use of inverse CDF reparameterization.

Consider a linear regression model:
\[
Y_i =X^T_i \beta + \epsilon_i, \quad \epsilon_i \sim \mathcal{N}(0, 1), \quad i=1, \ldots, n,
\]

We simulate a dataset with 1000 observations and 200 predictors, where $\beta_{50}=\beta_{100}=\beta_{150} = 10$, $\beta_{75}= \beta_{125}= -10$ and $\beta_j=0$ for all other $j$.

A spike-and-slab prior is imposed on $\beta$ such that
\[
\beta_j|\gamma_j \sim \gamma_j\mathcal{N}(0, \sigma^2_0) + (1-\gamma_j)\delta_0, \quad \gamma_j \sim \mbox{Bern}(\lambda), 
\]
for $j = 1,\ldots, 200$, where $\sigma_0=5$ and $\lambda=0.03$. The variational distribution $q(\beta)\mathcal Q$ is chosen as
\[
\beta_j|\gamma_j \sim \gamma_j\mathcal{N}(\mu_j, \sigma^2_j) + (1-\gamma_j)\delta_0, \quad \gamma_j \sim \mbox{Bern}(\phi_j).
\]

We use both Gumbel-softmax approximation and inverse-CDF reparameterization for the stochastic optimization of ELBO, and plot posterior mean $\mathbb{E}_{\widehat{q}(\beta)}(\beta_j|\gamma_j)$ (blue curve) against the true value (red curve). Figure $\ref{fig:linear}$ shows that inverse-CDF reparameterization exhibits only slightly higher error in estimating zero coefficients than the Gumbel-softmax approximation, which indicates the two methods has little difference on this toy example.

\begin{figure*}[!htb]
	\centering
	\begin{subfigure}[b]{.45\textwidth}
		\centering
		\includegraphics[width=1\linewidth]{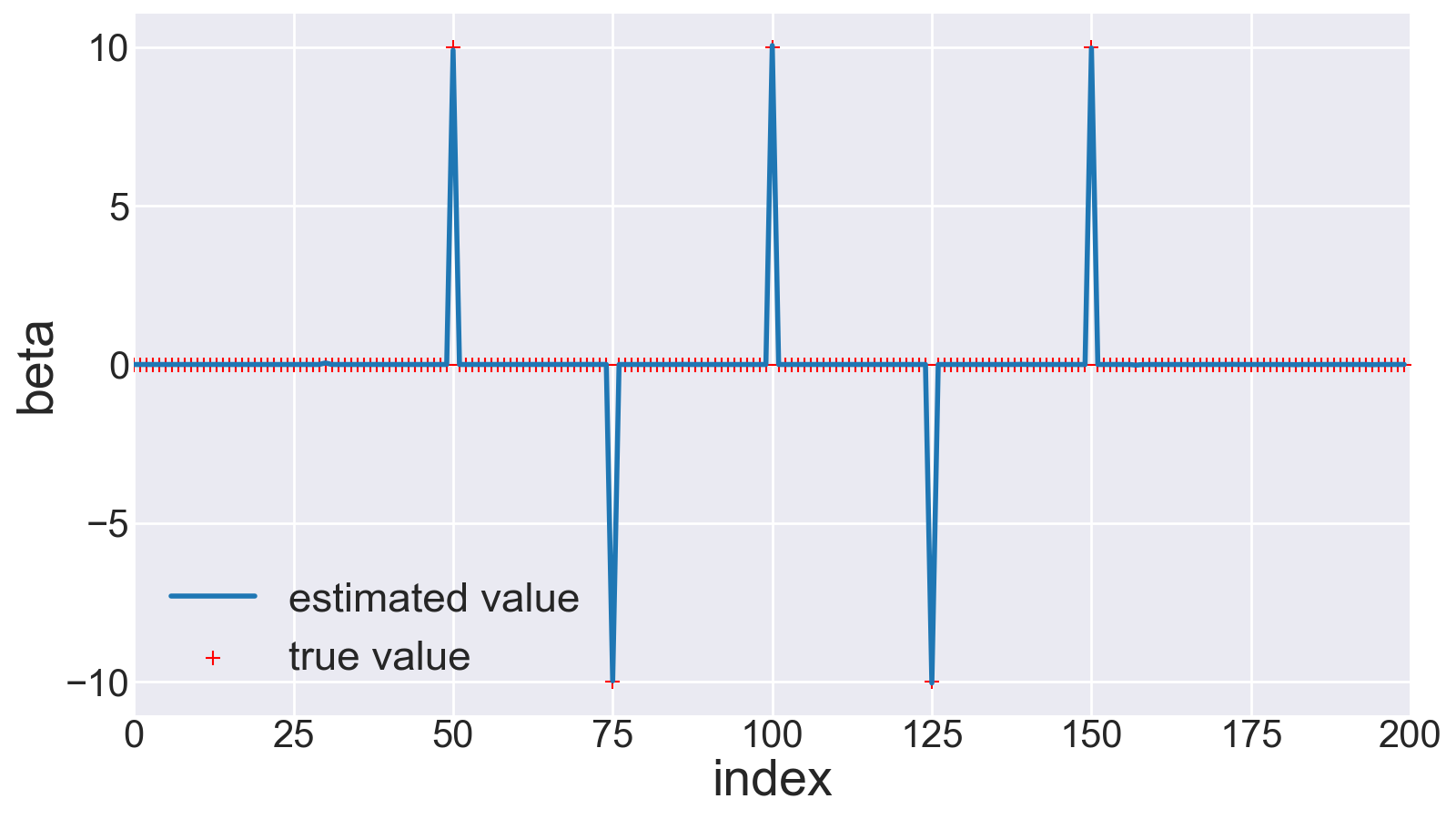}
		\caption{ Gumbel-softmax reparametrization }
	\end{subfigure}%
	\begin{subfigure}[b]{0.45\textwidth}
		\centering
		\includegraphics[width=1\linewidth]{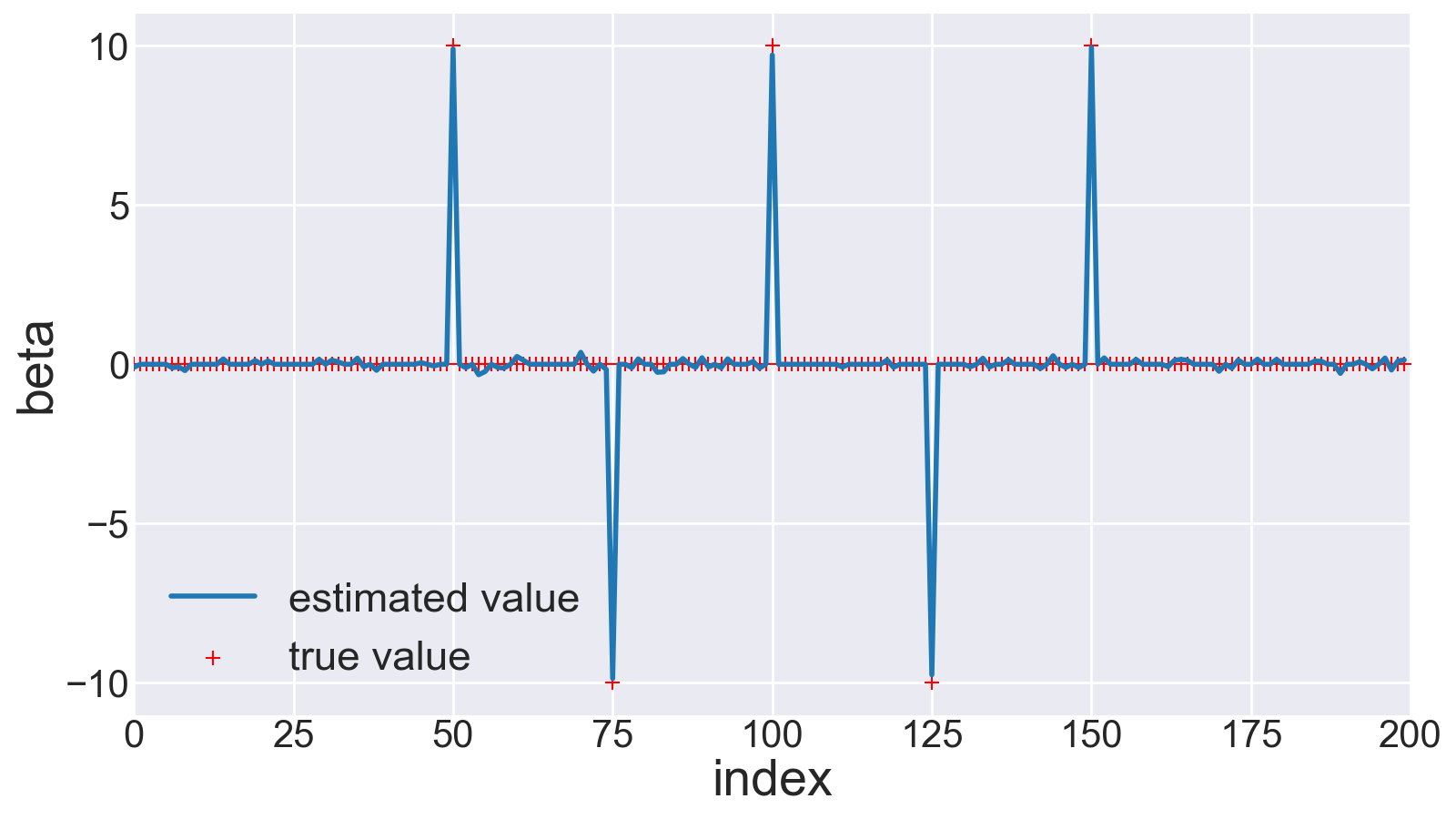}
		\caption{ Inverse-CDF reparametrization }
	\end{subfigure}
	\caption{Linear regression}
	\label{fig:linear}
\end{figure*}

\subsection{Teacher student networks}
The network parameter $\theta$ for the sparse teacher network setting (B) is set as following:
$W = \{W_{1,11}=W_{1,12}=W_{2,11}=W_{2,12}=2.5, W_{1,21}=W_{1,22}=W_{2,21}=W_{2,22}=1.5, W_{3,11}=3 \mbox{ and }W_{3,21}=2\}$; $b=\{b_{1,1}=b_{2,1}=b_{3,1}=1 \mbox{ and } b_{1,2}=b_{2,2}=-1\}$.

Figure \ref{fig:denseteacher_lambda} displays the simulation result for simulation I under dense teacher network (A) setting. Unlike the result under sparse teacher network (B), the testing accuracy seems monotonically increases as $\lambda$ increases (i.e., posterior network gets denser). However, as shown, the increasing of testing performance is rather slow, which indicates that introducing sparsity has few negative impact to the testing accuracy.

\begin{figure*}[!htb] 
	\centering
	\begin{subfigure}[b]{.33\textwidth}
		\centering
		\includegraphics[width=1\linewidth]{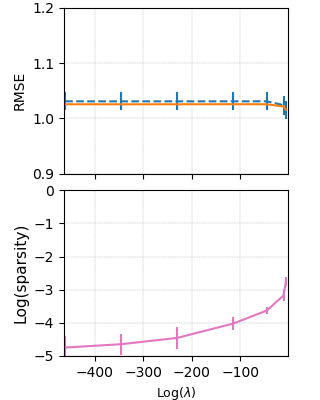}
		\caption{$\lambda \leq \lambda_{opt}$.}
	\end{subfigure}
	\begin{subfigure}[b]{.33\textwidth}
		\centering
		\includegraphics[width=1\linewidth]{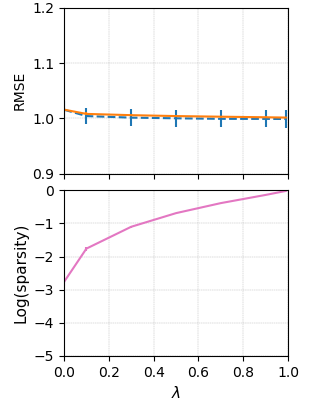}
		\caption{$\lambda \geq \lambda_{opt}$.}
	\end{subfigure}
	\begin{subfigure}[b]{.32\textwidth}
		\centering
		\includegraphics[width=1\linewidth]{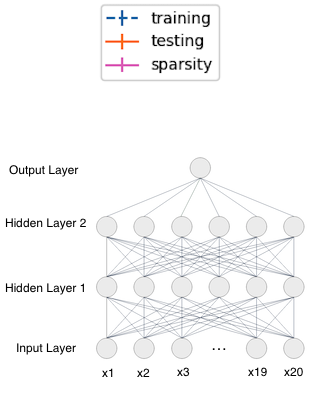}
		\caption{Dense teacher network.}
	\end{subfigure}%
	\caption{(a) $\lambda = \{10^{-200}, 10^{-150}, 10^{-100}, 10^{-50}, 10^{-20}, 10^{-5}, \lambda_{opt}\}$. (b) $\lambda = \{\lambda_{opt}, 0.1, 0.3, 0.5,$ $ 0.7, 0.9, 0.99\}$. (c) The structure of the target dense teacher network.}
	\label{fig:denseteacher_lambda}
\end{figure*}

\paragraph{Coverage rate} In this paragraph, we explain the details of how we compute the coverage rate values of Bayesian intervals reported in the main text. A fixed point $(x_1^{(*)},\dots, x_p^{(*)})'$ is prespecified, and let $x^{(1)},\dots,x^{(1000)}$ be 1000 equidistant points from $-1$ to $1$. In each run, we compute the Bayesian credible intervals of response means (estimated by 600 Monte Carlo samples) for 1000 different input $x$'s: $(x^{(1)},x_2^{(*)},\dots, x_p^{(*)}), \dots, (x^{(1000)},x_2^{(*)},\dots, x_p^{(*)})$. It is repeated by 60 times and the average coverage rate (over all different $x$'s and 60 runs) is reported. Similarly, we replace $x_2^{(*)}$ (or $x_3^{(*)}$) by $x^{(i)}$ ($i=1,\dots, 1000$), and compute their average coverage rate. 
%We compute the coverage rate for prediction as following: Train our model with 60 replications. In each replication, we assign 1000 equidistant points from $[-1, 1]$ to $x_1$, $x_2$ and $x_3$, then make predictions while keeping other predictors fixed, and 600 random samples are used to construct the prediction interval. 
The complete coverage rate results are shown in Table \ref{tb:cover}. Note that Table 1 in the main text shows $95 \%$ coverage of $x_3$ for (A) and $95 \%$ coverage of $x_1$ for (B).

\begin{table}[!htb]
	\caption{Coverage rates for teacher networks.}
	\label{tb:cover}
	\small
	\centering
	\begin{tabular}{llcccccc}
		\toprule
		&  & \multicolumn{3}{c}{\textbf{90 \% coverage (\%)}} & \multicolumn{3}{c}{\textbf{95\% coverage (\%)}} \\ \cmidrule(lr){3-5} \cmidrule(lr){6-8} 
		&\textbf{Method}  & $\boldsymbol{x_1}$ & $\boldsymbol{x_2}$ & $\boldsymbol{x_3}$ & $\boldsymbol{x_1}$ & $\boldsymbol{x_2}$ & $\boldsymbol{x_3}$ \\ 
		\midrule
		\multirow{4}{*}{\rotatebox[origin=c]{90}{Dense}} & SVBNN & 93.8 $\pm$ 2.84 & 93.1 $\pm$ 4.93 & 93.1 $\pm$ 2.96 & 97.9 $\pm$ 1.01 & 97.9 $\pm$ 1.69 & 97.5 $\pm$ 1.71 \\
		& VBNN & 85.8 $\pm$ 2.51 & 82.4 $\pm$ 2.62 & 86.3 $\pm$ 1.88 & 92.7 $\pm$ 2.83 & 91.3 $\pm$ 2.61 & 91.4 $\pm$ 2.43 \\
		& VD & 61.3 $\pm$ 2.40 & 60.0 $\pm$ 2.79 & 64.9 $\pm$ 6.17 & 74.9 $\pm$ 1.79 & 71.8 $\pm$ 2.33 & 76.4 $\pm$ 4.75 \\ 
		& HS-BNN &83.1 $\pm$ 1.67 &80.0 $\pm$ 1.21 &76.9 $\pm$ 1.70&88.1 $\pm$ 1.13 &84.1 $\pm$ 1.48 & 83.5 $\pm$ 0.78\\\midrule
		\multirow{4}{*}{\rotatebox[origin=c]{90}{Sparse}} & SVBNN & 92.3 $\pm$ 8.61 & 94.6 $\pm$ 5.37 & 98.3 $\pm$ 0.00 & 96.4 $\pm$ 4.73 & 97.7 $\pm$ 3.71 & 100 $\pm$ 0.00 \\
		& VBNN & 86.7 $\pm$ 10.9 & 87.0 $\pm$ 11.3 & 93.3 $\pm$ 0.00 & 90.7 $\pm$ 8.15 & 91.9 $\pm$ 9.21 & 96.7 $\pm$ 0.00 \\
		& VD & 65.2 $\pm$ 0.08 & 63.7 $\pm$ 6.58 & 65.9 $\pm$ 0.83 & 75.5 $\pm$ 7.81 & 74.6 $\pm$ 7.79 & 76.6 $\pm$ 0.40 \\
		&HS-BNN &59.0 $\pm$ 8.52 &59.4 $\pm$ 4.38 & 56.6 $\pm$ 2.06&67.0 $\pm$ 8.54 & 68.2 $\pm$ 3.62& 66.5 $\pm$ 1.86\\
		\bottomrule
	\end{tabular}
\end{table}

\subsection{Real data regression experiment: UCI datasets} \label{sec:uci}
We follow the experimental protocols of \cite{Lobato2015Probabilistic}, and choose five datasets for the experiment. For the small datasets "Kin8nm", "Naval", "Power Plant" and "wine", we choose a single-hidden-layer ReLU network with 50 hidden units. We randomly select $90\%$ and $10\%$ for training and testing respectively, and this random split process is repeated for 20 times (to obtain standard deviations for our results). We choose minibatch size $m=128$, $\mbox{learning rate}=10^{-3}$ and run 500 epochs for "Naval", "Power Plant" and "Wine", 800 epochs for "Kin8nm". For the large dataset "Year", we use a single-hidden-layer ReLU network with 100 hidden units, and the evaluation is conducted on a single split. We choose $m=256$, $\mbox{learning rate}=10^{-3}$ and run 100 epochs. For all the five datasets, $\lambda$ is chosen as $\lambda_{opt}$: $\log(\lambda^{-1}_{opt})=\log(T)+0.1[(L+1)\log N + \log \sqrt{n}p]$, which is the same as other numerical studies. We let $\sigma^2_0=2$ and use grid search to find $\sigma_{\epsilon}$ that yields the best prediction accuracy. Adam is used for all the datasets in the experiment.

We report the testing squared root MSE (RMSE) based on $\widehat{f}_H$ (defined in the main text) with $H=30$, and also report the posterior network sparsity $\widehat{s}=\sum^T_{i=1}\phi_i/T$. For the purpose of comparison, we list the results by Horseshoe BNN (HS-BNN) \citep{ghosh2017Model} and probalistic backpropagation (PBP) \citep{Lobato2015Probabilistic}. Table \ref{tb:uci} demonstrates that our method achieves best prediction accuracy for all the datasets with a sparse structure.

\begin{table}[!htb]
	\caption{Results on UCI regression datasets.}
	\label{tb:uci}
	\small
	\centering
	% \begin{tabular}{lccccccc}
	\begin{tabular}{llcccc}
		\toprule
		&  & \multicolumn{3}{c}{Test RMSE} & Posterior sparsity(\%) \\ \cmidrule(lr){3-5} \cmidrule(lr){6-6} 
		Dataset & $n (p)$ & SVBNN & HS-BNN & PBP & SVBNN \\
		\midrule
		Kin8nm & 8192 (8) & 0.08 $\pm$ 0.00 & 0.08 $\pm$ 0.00 & 0.10 $\pm$ 0.00 & 64.5 $\pm$ 1.85  \\
		Naval & 11934 (16) & 0.00 $\pm$ 0.00 & 0.00 $\pm$ 0.00 & 0.01 $\pm$ 0.00 & 82.9 $\pm$ 1.31 \\
		Power Plant & 9568 (4) & 4.01 $\pm$ 0.18 & 4.03 $\pm$ 0.15 & 4.12 $\pm$ 0.03 & 56.6 $\pm$ 3.13 \\
		Wine & 1599 (11) & 0.62 $\pm$ 0.04 & 0.63 $\pm$ 0.04 & 0.64 $\pm$ 0.01 & 59.9 $\pm$ 4.92  \\
		Year & 515345 (90) &8.87 $\pm$ NA  & 9.26 $\pm$ NA & 8.88 $\pm$ NA & 20.8 $\pm$ NA   \\
		\bottomrule
	\end{tabular}
\end{table}

\subsection{Real data classification experiment: MNIST dataset} \label{sec:mnist}

% To ensure a fair comparison of the sparsity level, we don't impose spike-and-slab prior on the biases for this experiment. 
The MNIST data is normalized by mean equaling 0.1306 and standard deviation equaling 0.3081. For all methods, we choose the same minibatch size $m = 256$, $\mbox{learning rate} = 5\times 10^{-3}$ for our method and $3 \times 10^{-3}$ for the others, total number of epochs is 400 and the optimization algorithm is RMSprop. AGP is pre-specified at $5\%$ sparsity level.

\begin{figure*}[!htb] 
	\centering
	\begin{subfigure}[b]{.45\textwidth}
		\centering
		\includegraphics[width=1\linewidth]{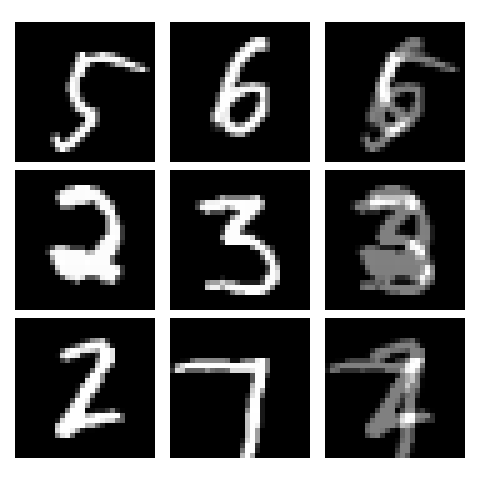}
		\caption{Overlaid images (on the last column)}
	\end{subfigure}
	\begin{subfigure}[b]{.45\textwidth}
		\centering
		\includegraphics[width=1\linewidth]{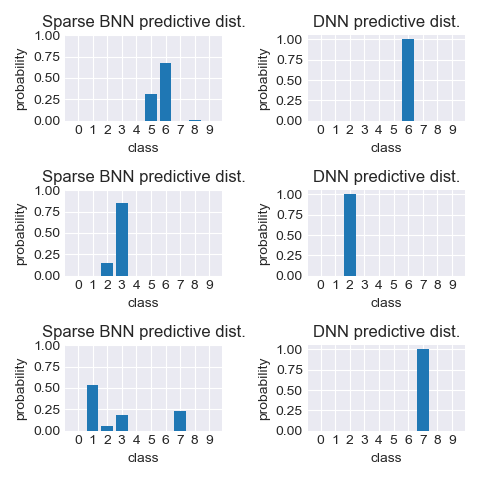}
		\caption{Predictive distribution for overlaid images}
	\end{subfigure}
	\caption{Top row of (b) exhibits the predictive distribution for the top overlaid image, which is made by 5 and 6; Middle row of (b) exhibits the predictive distribution for the middle overlaid image, which is made by 2 and 3; Bottom row of (b) exhibits the predictive distribution for the bottom overlaid image, which is made by 2 and 7.}
	\label{fig:mnist_uq}
\end{figure*}

Besides the testing accuracy reported in the main text, we also examine our method's ability of uncertainty quantification for MNIST classification task. We first create ambiguous images by overlaying two examples from the testing set as shown in Figure \ref{fig:mnist_uq} (a). To perform uncertainty quantification using our method, for each of the overlaid images, we generate $\theta_h$ from the VB posterior $\widehat{q}(\theta)$ for $h=1, \ldots, 100$, and calculate the associated predictive probability vector $f_{\theta_h}(x) \in \mathbb{R}^{10}$ where $x$ is the overlaid image input, and then use the estimated posterior mean $\widehat{f}(x)=\sum^{100}_{h=1}f_{\theta_h}(x)/100$ as the Bayesian predictive probability vector. As a comparison, we also calculate the predictive probability vector for each overlaid image using AGP as a frequentist benchmark. Figure \ref{fig:mnist_uq} (b) shows frequentist method gives almost a deterministic answer (i.e., predictive probability is almost 1 for certain digit) that is obviously unsatisfactory for this task, while our VB method is capable of providing knowledge of certainty on these out-of-domain inputs, which demonstrates the advantage of Bayesian method in uncertainty quantification on the classification task.

% \begin{figure*}[!htb] 
%     \centering
%     \begin{subfigure}[b]{.45\textwidth}
%         \centering
%         \includegraphics[width=1\linewidth]{mnist_uq1.png}
%         \caption{Overlaid images (on the last column)}
%     \end{subfigure}
%         \begin{subfigure}[b]{.45\textwidth}
%         \centering
%         \includegraphics[width=1\linewidth]{mnist_uq2.png}
%         \caption{Predictive distribution for overlaid images}
%     \end{subfigure}
%     \caption{Top row of (b) exhibits the predictive distribution for the top overlaid image, which is made by 5 and 6; Middle row of (b) exhibits the predictive distribution for the middle overlaid image, which is made by 2 and 3; Bottom row of (b) exhibits the predictive distribution for the bottom overlaid image, which is made by 2 and 7.}
%     \label{fig:mnist_uq}
% \end{figure*}
\subsection{Illustration of CNN: Fashion-MNIST dataset} \label{sec:fmnist}
In this section, we perform an experiment on a more complex task, the Fashion-MNIST dataset. To illustrate the usage of our method beyond feedforward networks, we consider using a 2-Conv-2-FC network: The feature maps for the convolutional layers are set to be 32 and 64, and the filter size are $5 \times 5$ and $3 \times 3$ respectively. The paddings are 2 for both layers and the it has a $2 \times 2$ max pooling for each of the layers; The fully-connected layers have $64 \times 8 \times 8$ neurons. The activation functions are all ReLUs. The dataset is prepocessed by random horizontal flip. The batchsize is 1024, learning rate is 0.001, and Adam is used for optimization. We run the experiment for 150 epochs.

We use both SVBNN and VBNN for this task. In particular, the VBNN, which uses normal prior and variational distributions, is the full Bayesian method without compressing, and can be regarded as the baseline for our method. Figure \ref{fig:fmnist} exhibits our method attains higher accuracy as epoch increases and then decreases as the sparisty goes down. Meanwhile, the baseline method - full BNN suffers from overfitting after 80 epochs.

\begin{figure*}[bt] 
	\centering   
	\begin{subfigure}[b]{.4\textwidth}
		\centering
		\includegraphics[width=1\linewidth]{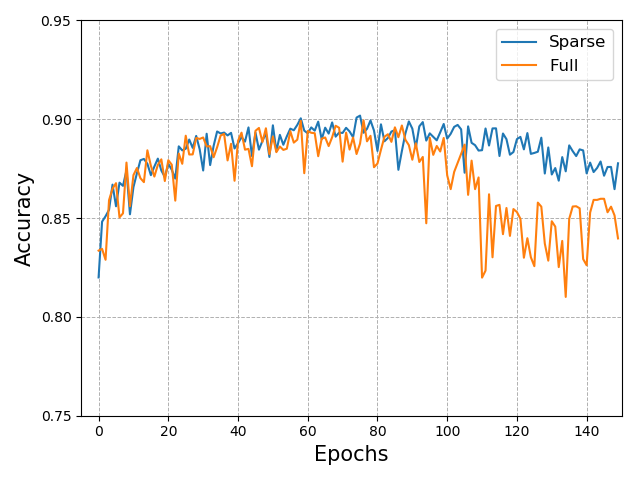}
		\caption{Accuracy.}
	\end{subfigure}
	\begin{subfigure}[b]{.4\textwidth}
		\centering
		\includegraphics[width=1\linewidth]{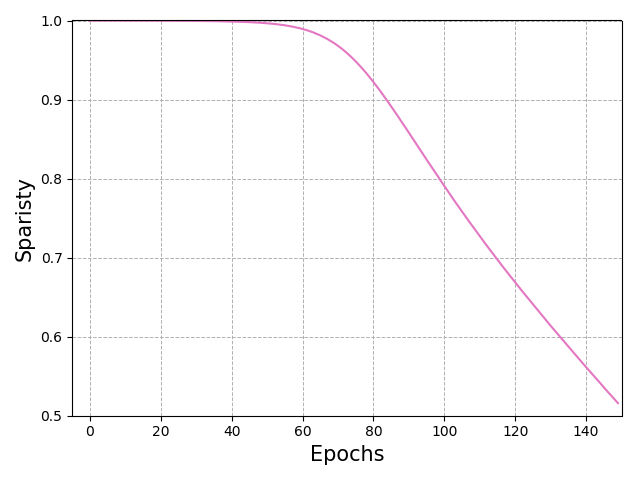}
		\caption{Sparsity.}
	\end{subfigure}
	\caption{Fashion-MNIST experiment.}
	\label{fig:fmnist}
\end{figure*}

\end{document}